\documentclass{article}
\usepackage[accepted]{icml2025}

\usepackage[colorlinks,citecolor=blue]{hyperref}
\usepackage[utf8]{inputenc} 
\usepackage{url}            
\usepackage{booktabs}       
\usepackage{amsfonts}       
\usepackage{nicefrac}       
\usepackage{microtype}      

\usepackage{times}             

\usepackage[english]{babel} 

\usepackage{natbib}

\usepackage{amssymb}
\usepackage{mathtools}
\usepackage{amsthm}

\usepackage{bbm}
\usepackage{verbatim,float,dsfont}
\usepackage{psfrag}
\usepackage{algorithm,algorithmic}
\usepackage{thmtools,thm-restate}
\usepackage{mathtools,enumitem}
\mathtoolsset{showonlyrefs}
\usepackage{tcolorbox}
\usepackage{breqn,lipsum}
\usepackage{stackengine}
\usepackage{array}
\usepackage{caption}
\usepackage{boldline}
\usepackage{amssymb}
\usepackage{multirow}
\usepackage{color}
\usepackage{colortbl}
\usepackage{makecell}

\usepackage{subfig}
\usepackage{graphicx}
\usepackage{wrapfig}
\usepackage[nocomma]{optidef}
\usepackage{mdframed}
\usepackage{subfig}
\usepackage{soul}
\usepackage{placeins}
\usepackage{afterpage}

\usepackage{tikz}
\usetikzlibrary{shapes,arrows}

\usetikzlibrary{positioning, quotes}

\newtheorem{theorem}{Theorem}
\newtheorem{lemma}[theorem]{Lemma}
\newtheorem{corollary}[theorem]{Corollary}

\newtheorem{definition}[theorem]{Definition}
\newtheorem{assumption}[theorem]{Assumption}

\makeatletter
\newcommand{\setword}[2]{%
  \phantomsection
  #1\def\@currentlabel{\unexpanded{#1}}\label{#2}%
}
\makeatother

\makeatletter

\makeatother

\makeatletter
\def\set@curr@file#1{\def\@curr@file{#1}} 
\makeatother
\usepackage[load-configurations=version-1]{siunitx} 

\DeclarePairedDelimiter\floor{\lfloor}{\rfloor}


\def\shownotes{1}  
\ifnum\shownotes=1
\newcommand{\authnote}[2]{$\ll$\textsf{\footnotesize #1 notes: #2}$\gg$}
\else
\newcommand{\authnote}[2]{}
\fi


\usepackage{accents}

\makeatletter
\newcommand*\rel@kern[1]{\kern#1\dimexpr\macc@kerna}
\newcommand*\widebar[1]{%
  \begingroup
  \def\mathaccent##1##2{%
    \rel@kern{0.8}%
    \overline{\rel@kern{-0.8}\macc@nucleus\rel@kern{0.2}}%
    \rel@kern{-0.2}%
  }%
  \macc@depth\@ne
  \let\math@bgroup\@empty \let\math@egroup\macc@set@skewchar
  \mathsurround\z@ \frozen@everymath{\mathgroup\macc@group\relax}%
  \macc@set@skewchar\relax
  \let\mathaccentV\macc@nested@a
  \macc@nested@a\relax111{#1}%
  \endgroup
}
\makeatother

\def\cF{\mathcal{F}}

\def\cI{\mathcal{I}}

\def\cS{\mathcal{S}}

\def\cX{\mathcal{X}}

\def\bs{\ensuremath\boldsymbol}

\usepackage{times}

\DeclareMathSymbol{\mrq}{\mathord}{operators}{`'}

\icmltitlerunning{Adaptive Estimation and Learning under Temporal Distribution Shift}

\begin{document}

\twocolumn[
\icmltitle{Adaptive Estimation and Learning under Temporal Distribution Shift}



\icmlsetsymbol{equal}{*}

\begin{icmlauthorlist}
\icmlauthor{Dheeraj Baby}{amazon}
\icmlauthor{Yifei Tang}{amazon}
\icmlauthor{Hieu Duy Nguyen}{amazon}
\icmlauthor{Yu-Xiang Wang}{amazon,sch}
\icmlauthor{Rohit Pyati}{amazon}

\end{icmlauthorlist}

\icmlaffiliation{amazon}{Amazon}
\icmlaffiliation{sch}{University of California San Diego}

\icmlcorrespondingauthor{Dheeraj Baby}{dheerajbaby@gmail.com}

\icmlkeywords{Machine Learning, ICML}

\vskip 0.3in
]



\printAffiliationsAndNotice{}  

\begin{abstract}
In this paper, we study the problem of estimation and learning under temporal distribution shift. Consider an observation sequence of length $n$, which is a noisy realization of a time-varying groundtruth sequence. Our focus is to develop methods to estimate the groundtruth at the final time-step while providing sharp point-wise estimation error rates. We show that, \emph{without prior knowledge} on the level of temporal shift, a wavelet soft-thresholding estimator provides an \emph{optimal} estimation error bound for the groundtruth. Our proposed estimation method generalizes existing researches \citet{mazzetto2023} by establishing a connection between the sequence's non-stationarity level and the sparsity in the wavelet-transformed domain. Our theoretical findings are validated by numerical experiments. Additionally, we applied the estimator to derive sparsity-aware excess risk bounds for binary classification under distribution shift and to develop computationally efficient training objectives. As a final contribution, we draw parallels between our results and the classical signal processing problem of total-variation denoising \citep{locadapt,tibshirani2014adaptive}, uncovering \emph{novel optimal} algorithms for such task.
\end{abstract}

\section{Introduction} \label{sec:intro}
Standard statistical estimation problems assumes access to independent and identically distributed (i.i.d.) observations. However, in many practical applications the familiar i.i.d. assumption is often not applicable. Hence it is important to study the statistical estimation problem while relaxing such assumptions. In this paper, we take a step in this direction by relaxing the identical distributed assumption on the observations.

Concretely, we consider an estimation task where we are given access to $n$ independently drawn observations $y_n,y_{n-1},\ldots,y_1$ such that $E[y_i] = \theta_i \in \mathbb R$ and $\text{Var}(y_i)$ being finite for all $i \in [n]:= \{1,\ldots,n \}$. Our goal is to construct an estimate $\hat \theta_1$ for the most recent ground-truth $\theta_1$. This estimation task is strongly related to various practical scenarios, such as estimating/predicting the latest stock price, temperature, humidity or network traffic based on a sequence of historically observed and correlated measurements. We remark that any algorithm for the aforementioned problem setting can be directly used to construct estimates for $\theta_n,\ldots,\theta_2$ as well. The reverse-indexing of time is mainly used for notational convenience.

Note that constructing an estimator $\hat \theta_1$ itself is not enough. In many scenarios, we also need to provide statistical \emph{guarantees} for such estimators, for example to calculate the risk profile of a stock-picking strategy for a stock-price estimator. To achieve very high estimation quality, we would like to obtain a point-wise performance guarantee for the estimate, i.e., a bound on $|\hat \theta_1 - \theta_1|$ which is as small as possible. This requirement is more rigorous in comparison to controlling the estimation error for the sequqnce $\theta_n, \ldots, \theta_1$ in terms of some aggregate performance metric. For example, a point-wise bound implies a control over cumulative performance metrics like the mean squared error (MSE) or regret from online learning \citep{hazan2016introduction} is controlled. 

\begin{figure*}[h]
\centering
\includegraphics[scale = 0.4]{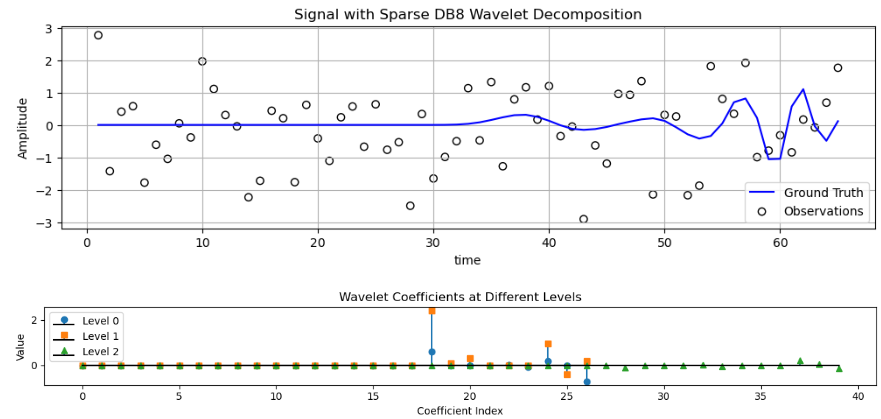}  
\caption{\emph{We have access to noisy observations of a ground truth signal, and our task is to estimate the ground truth at the latest time. Although the ground truth signal appears to exhibit a non-stationary trend in the time domain, the wavelet transform reveals a sparse set of coefficients. As a result, we only need to estimate a few key coefficients, enabling wavelet denoising-based estimators to achieve sharp point-wise and data-adaptive error rates.}}
\label{fig:illus}
\vspace{-0.15in}
\end{figure*}

A natural algorithm for estimating $\theta_1$ is to average the most recent observations within a sliding window. However deciding on the optimal window size in a completely data-adaptive manner is non-trivial. Choosing a small window size will lead to an estimate with small bias but large variance. A large window size will lead to variance reduction at the expense of introducing large bias. Hence to attain a sharp point-wise estimation error guarantee, one must choose a window size with an optimal bias-variance tradeoff. The second challenge for this task is the dependency on the smoothness/stationarity of the ground-truth sequence near the timepoint $1$ of interest, which directly influences the optimal window. In practice, we often do not have prior knowledge on the smoothness/stationarity of the ground-truth sequence. As a consequence, we also need to (potentially in an implicit way) learn the smoothness/stationarity of the underlying ground-truth sequence and adjust the averaging window size accordingly.

A version of this problem has been posed as an open problem in \citet{hanneke19} and was first addressed in the work of \citet{mazzetto2023}. In this study, we take a fresh look at the same problem and show that the famous wavelet soft thresholding algorithm \citep{softThreshold95}, using Haar wavelets, already recovers the results from \citet{mazzetto2023}. The solution put forward by this paper based on the general machinery of wavelets not only offers fresh perspectives but also lead to deeper statistical implications on the estimation problem, which are described as follows.

Specifically, as remarked earlier, the quality of estimation is connected to the stationarity (or smoothness) level of the ground-truth sequence. The more stationary the sequence is, sharper the guarantees/bounds of the estimate become. The solution in \citet{mazzetto2023} leverages local stability to capture non-stationarity, i.e, their algorithm effectively chooses the largest window in which the ground-truth sequence (recall that the sequence formed by starting from time $1$ and going to less recent indices) has small variation and is close to a constant signal. The larger such a window is, the smaller the estimation error rate becomes since averaging with such a window can lead to significant variance reduction while introducing only a little bias. Approaching the problem from a different angle, we show that an estimator, based on wavelet soft thresholding, will result into an estimation error which strongly correlates with the sparsity of the wavelet coefficients of the ground-truth sequence. From that perspective, the stationarity level of the ground-truth is strongly connected to the sparsity level of the wavelet coefficients. Our approach based on wavelet coefficients' sparsity is more general than the notion of local stability. For example, by using higher order Daubechies wavelets, one can potentially capture complex trends in the evolution of the groundtruth sequence with a sparse set of wavelet coefficients, leading to fast estimation guarantees/bounds (see Fig. \ref{fig:illus}). On the other hand, in the presence of complex temporal evolution patterns, the optimal window chosen based on the notion of local stability can potentially be small which cause a moving-average estimate to incur larger error due to high variance.

We summarize our list of contributions below:

\begin{itemize}
    
    \item We show that a wavelet-denoising-based algorithm using Haar tranform achieves the optimal \emph{point-wise} estimation error guarantees similar to that of \citet{mazzetto2023} (Theorem \ref{thm:point-wise}). Note that by using higher order wavelet transforms, it is possible to obtain a general error bound, based on the sparsity level of the wavelet coefficients of the ground-truth, which can be potentially sharper than the bounds obtained by using the Haar system (Lemma \ref{lem:gen-wave}).

    \item We present a rigorous theoretical examination of distribution shift's consequences for machine learning models, conclusively showing that temporal variations undermine model performance. (Section \ref{sec:dist_shift}). 

    \item We consider the problem of binary classification under temporally shifted dataset. By using wavelet-based loss estimators, we obtain an oracle-efficient and statistically near-optimal algorithm that requires a \emph{single} call to empirical risk minimization (ERM) oracle, thereby improving the computational efficiency in comparison to the algorithm proposed in \citet{mazzetto2023} which requires $O(\log n)$ ERM calls (Theorem \ref{thm:erm}). Furthermore for differentiable surrogate losses, our ERM objective preserves the differentiability and enables oracle-efficient procedures, unlike the methods in prior works (Section \ref{sec:supervised}).

    \item We prove a general result that any algorithm obtaining a point-wise estimation error guarantee similar to that from our proposed wavelet denoising algorithm under Haar system (see Theorem \ref{thm:point-wise}) is automatically minimax optimal for the problem of total-variation (TV) denoising. This reveals a previously unknown minimax optimality of several existing algorithms for the TV denoising problem (Section \ref{sec:npr}).

    \item Finally, we present numerical results to showcase the superior performance of our proposed (higher order) wavelet-denoising based algorithms in estimating the ground-truth,  compared to prior works (Section \ref{sec:exp}).

\end{itemize}

The rest of the paper is organized as follows: Section \ref{sec:estimation} presents our theory and algorithm for estimating the ground truth. Section \ref{sec:dist_shift} analyzes train-test distribution mismatch, followed by a binary classification algorithm for temporally drifted data in Section \ref{sec:supervised}. Connections to TV-denoising are explored in Section \ref{sec:npr}, and experimental results are in Section \ref{sec:exp}. Due to space constraints, discussion on related works and preliminaries are deferred to Appendix \ref{app:lit} and \ref{app:prelims} respectively.

\section{Estimation Under Temporal Distribution Shift} \label{sec:estimation}

In this section we consider the estimation setup specified in Section \ref{sec:intro} and study algorithmic approaches to estimate the groundtruth at the latest time. This will later help to train ML models under training distribution shift in Section \ref{sec:supervised}.

\subsection{Algorithm}
We propose to use a wavelet-denoising-based algorithm, which is adapted from the well-known idea of wavelet smoothing \citep{softThreshold95}. The algorithm is presented in Algorithm \ref{alg:wave} for the sake of completeness. A fundamental difference between the wavelet-based solution and algorithms from prior works \citep{mazzetto2023,han2024} is that while the latter maintain an adaptive window-size for averaging the relevant past observations, the wavelet-based solution maintains no such window-size. Instead it implicitly uses the most relevant portion of data to estimate the groundtruth at the final time-step.

\subsection{Analysis} \label{sec:analysis}

All proofs of this section are deferred to the Appendices. We emphasize that all the estimation error bounds presented in this section are achievable by Algorithm \ref{alg:wave} without any prior knowledge on the degree of distribution drift. We begin with a simple property of soft-thresholding operation.

\begin{restatable}
{lemma}{genwave} \label{lem:gen-wave}
Consider the observation model $y_i = \theta_i + \epsilon_i$, for $i=1,\ldots,n$ with $\epsilon_i$ being iid $\sigma$-sub-gaussian random variables. Let $\bs y := [y_n,\ldots,y_1]^T$, $\bs \theta = [\theta_n,\ldots,\theta_1]$ and $\bs W$ be an orthonormal wavelet transform matrix. Let $\tilde {\bs \beta} := \bs W \bs y$ 
 and ${\bs \beta} := \bs W \bs \theta$ be respectively the empirical and true wavelet coefficients. Let $\cI$ be an index set of wavelet coefficients that affect the value of reconstruction of the last groundtruth $\theta_1$. Let $W_{i,n}$ be the value of the element in $i^{th}$ row and $n^{th}$ column of $\bs W$. Let $\hat \theta_1$ be the estimate of the groundtruth $\theta_1$ obtained via Algorithm \ref{alg:wave} with $\lambda =  2 \sigma \sqrt{2 \log(\log n / \delta)}$. Define $(a \wedge b) := \min \{ a, b\}$, we have with probability at-least $1-\delta$ that
\begin{align}
    |\hat \theta_1  - \theta_1|
    \le \sum_{i \in \cI} 6|W_{i,n}| \cdot  (|\beta_i| \wedge \lambda).
\end{align}

\end{restatable}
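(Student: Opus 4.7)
The plan is to move everything to the wavelet domain, prove a per-coordinate soft-thresholding bound on a high-probability event, then push the bound back through the orthonormal inverse transform. I would proceed in four steps.

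\textbf{Step 1 (Reduction to the wavelet domain).} Because $\bs W$ is orthonormal, $\tilde{\bs\beta}=\bs W\bs y=\bs\beta+\bs\xi$ with $\bs\xi:=\bs W\bs\epsilon$. Each coordinate $\xi_i=\sum_j W_{i,j}\epsilon_j$ is a linear combination of independent $\sigma$-sub-gaussians whose weights satisfy $\sum_j W_{i,j}^2=1$, so $\xi_i$ is itself $\sigma$-sub-gaussian. Similarly, applying the inverse transform to $\hat{\bs\beta}$ (the soft-thresholded coefficients produced by Algorithm~\ref{alg:wave}), the $n$-th entry of $\bs W^{T}\hat{\bs\beta}$ gives $\hat\theta_1=\sum_i W_{i,n}\hat\beta_i$, and likewise $\theta_1=\sum_i W_{i,n}\beta_i$. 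Since $W_{i,n}=0$ for $i\notin\cI$, the reconstruction error is $\hat\theta_1-\theta_1=\sum_{i\in\cI}W_{i,n}(\hat\beta_i-\beta_i)$.

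\textbf{Step 2 (High-probability event).} Define $\cE:=\bigcap_{i\in\cI}\{|\xi_i|\le \lambda/2\}$. For a $\sigma$-sub-gaussian variable, $\Pr(|\xi_i|>\lambda/2)\le 2\exp(-\lambda^2/(8\sigma^2))$; plugging in $\lambda=2\sigma\sqrt{2\log(\log n/\delta)}$ yields $2\delta/\log n$. Because the index set $\cI$ of wavelet coefficients influencing position $1$ has cardinality $O(\log n)$ for the wavelet families considered, a union bound gives $\Pr(\cE)\ge 1-\delta$ (constants can be adjusted by redefining the threshold/event; this is where the factor 6 in the conclusion will absorb slack).

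\textbf{Step 3 (Per-coordinate soft-thresholding bound).} On $\cE$, I will show $|\hat\beta_i-\beta_i|\le 6(|\beta_i|\wedge\lambda)$ via a two-case analysis of the soft-thresholding operator $S_\lambda$. If $|\tilde\beta_i|\le \lambda$, then $\hat\beta_i=0$, so $|\hat\beta_i-\beta_i|=|\beta_i|\le |\tilde\beta_i|+|\xi_i|\le 3\lambda/2$, and one checks that $|\beta_i|\le 2(|\beta_i|\wedge\lambda)$ whether $|\beta_i|\le\lambda$ or $\lambda<|\beta_i|\le 3\lambda/2$. If instead $|\tilde\beta_i|>\lambda$, then $|\hat\beta_i-\beta_i|=|\xi_i-\lambda\,\mathrm{sign}(\tilde\beta_i)|\le 3\lambda/2$; crucially, in this sub-case $|\beta_i|\ge |\tilde\beta_i|-|\xi_i|>\lambda/2$, so $(|\beta_i|\wedge\lambda)\ge \lambda/2$ and the bound is again of the claimed form. (For wavelet algorithms that leave the coarsest scaling coefficient unthresholded, $|W_{i,n}|$ for that index is $O(1/\sqrt{n})$ and one handles it separately with the noise bound $|\xi_i|\le\lambda/2$.)

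\textbf{Step 4 (Assemble).} Combining Steps 1 and 3 via the triangle inequality:
\begin{align}
|\hat\theta_1-\theta_1|\le \sum_{i\in\cI}|W_{i,n}|\,|\hat\beta_i-\beta_i|\le \sum_{i\in\cI}6|W_{i,n}|\,(|\beta_i|\wedge\lambda),
\end{align}
which holds on the event $\cE$ of probability at least $1-\delta$. The main obstacle, and what the case analysis in Step 3 is designed to overcome, is the mismatch between the two "regimes" of soft-thresholding: when a coordinate survives the threshold the pointwise error is of order $\lambda$ but $|\beta_i|$ could a priori be small, so one must exploit that $|\tilde\beta_i|>\lambda$ together with $|\xi_i|\le\lambda/2$ to force $|\beta_i|\gtrsim\lambda$ and recover a bound in terms of $|\beta_i|\wedge\lambda$. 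Tuning the event threshold to $\lambda/2$ (rather than $\lambda$) is the key choice that makes both Step 2 (union bound with the prescribed $\lambda$) and Step 3 (the per-coordinate inequality) simultaneously go through.
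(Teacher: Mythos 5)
Your proposal is correct and follows essentially the same route as the paper's proof: a high-probability event bounding the transformed noise by $\lambda/2$, a per-coordinate case analysis of the soft-thresholding error yielding $|\hat\beta_i-\beta_i|\le 6(|\beta_i|\wedge\lambda)$, and a triangle inequality through the orthonormal reconstruction. The only cosmetic difference is that you split cases on $|\tilde\beta_i|$ while the paper splits on $|\beta_i|$ (and your handling of the transformed noise as $\sigma$-sub-gaussian via unit-norm rows is actually slightly cleaner than the paper's appeal to a spherical normal).
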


\begin{algorithm}[t]
    \caption{Wavelet-Denoising Algorithm}
    \begin{algorithmic}[1]
        \STATE \textbf{Input:} data $y_n,\ldots,y_1 \in \mathbb{R}^d$, Wavelet Transform matrix $\bs W$, soft-threshold $\lambda$, failure probability $\delta$.
        
        \STATE Initialize $\bs{y} \leftarrow [y_n,y_1,\ldots,y_1]^T \in \mathbb R^n$.

        \STATE Compute empirical wavelet coefficients $\bs { \tilde \beta} \leftarrow \bs W\bs y$.

        \STATE Compute denoised coefficients $\hat {\bs \beta} \leftarrow T_\lambda(\tilde{\bs \beta})$, where for an $x \in \mathbb R, T_\lambda (x) := \text{sign}(x) \max \{|x| - \lambda, 0 \}$ is the soft-thresholding operator. When acted upon a vector, the soft-thresholding is performed coordinate-wise.

        \STATE Reconstruct (a.k.a inverse wavelet transform) the signal by $\hat {\bs \theta} \leftarrow \bs W^T \hat{ \bs \beta}$.
        
        \STATE Return the last coordinate of $\hat {\bs \theta}$.
    \end{algorithmic} \label{alg:wave}
\end{algorithm}

Next we show that if we use the Haar wavelet system in Algorithm \ref{alg:wave}, one can recover the variational bounds similar to the results in \citet{mazzetto2023}, indicating the versatility of the wavelet-based solution.

\begin{restatable}{theorem}{pointwise} \label{thm:point-wise}
Let $\bar \theta_{t:1} = (\theta_1+\ldots+\theta_t)/t$. For a time-point $r$, let $S(r) := \{1,2,4,\ldots,2^{\floor{\log_2 r}} \}$. Let $U(r) := \max_{t \in S(r)} |\bar{\theta}_{t:1} - \theta_1|) \vee \sigma/\sqrt{r}$ and $\min_{r \in \{1,\ldots,n\}} U(r) := U(r^*)$ with $r^*$ being the \emph{smallest} time-point where the equality holds. By using the Haar wavelet system in Algorithm \ref{alg:wave}, we have with probability at-least $1-\delta$ that
\begin{align}
    |\hat \theta_1 - \theta_1|
    &\le \kappa \cdot U(r^*),
\end{align}
where $\kappa = (4\sqrt{2 \log (\log n/\delta)} \vee  2\sqrt{2}) (\log_2 n  + 1)$ and $(a \vee b) := \max\{a,b \}$.
\end{restatable}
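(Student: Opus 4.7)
The plan is to instantiate Lemma \ref{lem:gen-wave} with the Haar basis and then match the resulting bound against the dyadic quantity $U(r^*)$. First I would identify the index set $\cI$ of Haar wavelets whose support contains position $n$ in the time-reversed vector $\bs \theta = [\theta_n,\ldots,\theta_1]^T$. Assuming $n = 2^L$ (otherwise zero-pad to the next power of two; this is compatible with $U$ because $S(r)$ is itself dyadic), the only relevant wavelets are the father $\phi$ and, for each scale $s = 1,\ldots,L$, the unique mother $\psi_s$ supported on the rightmost $2^s$ positions. Hence $|\cI| = L+1 = \log_2 n + 1$.

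Next I would evaluate $|W_{i,n}|(|\beta_i|\wedge\lambda)$ for each $i\in\cI$. For the father, $|W_{\phi,n}| = 1/\sqrt n$ gives a contribution of at most $\lambda/\sqrt n$. For the mother at scale $s$, a direct evaluation together with the telescoping identity $\bar\theta_{2^s:2^{s-1}+1} = 2\bar\theta_{2^s:1} - \bar\theta_{2^{s-1}:1}$ yields $|\beta_s| = 2^{s/2}\,|\bar\theta_{2^s:1} - \bar\theta_{2^{s-1}:1}|$ and $|W_{s,n}| = 2^{-s/2}$, so the product collapses to $\min\bigl(|\bar\theta_{2^s:1} - \bar\theta_{2^{s-1}:1}|,\;\lambda/\sqrt{2^s}\bigr)$. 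This is the key identity: the Haar detail coefficient at scale $s$ contributes, after reconstruction at the final time, exactly the gap between suffix averages of lengths $2^s$ and $2^{s-1}$, dampened by the threshold $\lambda/\sqrt{2^s}$.

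I would then split the sum at $j^* := \floor{\log_2 r^*}$. For $s \le j^*$, both $2^s,\,2^{s-1}\in S(r^*)$, so the triangle inequality gives $|\bar\theta_{2^s:1} - \bar\theta_{2^{s-1}:1}| \le 2\,U(r^*)$. For $s > j^*$, I would switch to the variance side $\lambda/\sqrt{2^s}\le \lambda/\sqrt{r^*}$, and, using the plug-in $\lambda = 2\sigma\sqrt{2\log(\log n/\delta)}$ together with the definitional inequality $\sigma/\sqrt{r^*} \le U(r^*)$, bound each such term by a $\sqrt{\log(\log n/\delta)}$ multiple of $U(r^*)$; the father term $\lambda/\sqrt n$ obeys the same bound. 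Summing the $\log_2 n + 1$ contributions, each controlled by the larger of $2U(r^*)$ and $O(\sqrt{\log(\log n/\delta)})\,U(r^*)$, and absorbing the Lemma-\ref{lem:gen-wave} shrinkage constant, collects into $\kappa\cdot U(r^*)$ with $\kappa$ of the claimed product form.

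The main obstacle I anticipate is the coefficient bookkeeping for Haar under the time-reversed indexing: one must carefully track which half of each dyadic support contains the most recent indices so that $\beta_s$ expresses cleanly in terms of the suffix averages $\bar\theta_{t:1}$ that define $U$, and the telescoping identity must be oriented so that the triangle-inequality step interfaces with $S(r^*)$ rather than with arbitrary dyadic partitions. Matching the exact constant $(4\sqrt{2\log(\log n/\delta)}\vee 2\sqrt 2)$ is a secondary, bookkeeping-level task that amounts to tightening the shrinkage constant from Lemma \ref{lem:gen-wave}; the essential content is that the generic wavelet bound telescopes onto the oracle dyadic rate $U(r^*)$ with only a $\log_2 n + 1$ loss.
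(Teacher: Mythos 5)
Your proposal is correct and follows essentially the same route as the paper: specialize Lemma \ref{lem:gen-wave} to the Haar system, observe that the scale-$s$ detail coefficient contributes exactly $\min\bigl(|\bar\theta_{2^s:1}-\bar\theta_{2^{s-1}:1}|,\;\lambda\,2^{-s/2}\bigr)$ after reconstruction at time $1$, and split the $\log_2 n + 1$ contributions at $j^*=\floor{\log_2 r^*}$, controlling scales with $2^s\le 2^{j^*}$ by the bias part of $U(r^*)$ and scales with $2^s>r^*$ (plus the father term) by $\lambda/\sqrt{r^*}$. The one local difference is that you bound the bias terms directly from the definition of $U(r^*)$ over the dyadic set $S(r^*)$, whereas the paper's Case (b) detours through the minimality of $r^*$ to establish $\max_{t\le r^*}|\bar\theta_{t:1}-\theta_1|<\sigma/\sqrt{r^*-1}$; your version is slightly cleaner and, like the paper's own argument, matches the stated $\kappa$ only up to the shrinkage constant inherited from Lemma \ref{lem:gen-wave}.
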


Although the above theorem is a testament to the versatility of wavelet-based methods for the estimation problem, we remind the reader that it is only an upper-bound to the general bound developed in Lemma \ref{lem:gen-wave}. Lemma \ref{lem:gen-wave} holds the potential to achieve faster estimation error rates by leveraging the sparsity of wavelet coefficients in any user-specified wavelet transformed space. This is indeed corroborated by our numerical experiments in Section \ref{sec:exp}. Though we do not provide a specific construction where Lemma \ref{lem:gen-wave} can lead to faster error rate than that of Theorem \ref{thm:point-wise}, we empirically compare the corresponding upperbounds in Fig.\ref{fig:ub} (Appendix \ref{app:numeric}) to validate this idea. However, a limitation of Lemma \ref{lem:gen-wave} is that for general wavelet transform matrices, it does not guarantee a rate that is at-least as good as the one given by Theorem \ref{thm:point-wise}, obtained via using Haar wavelets. In the following corollary, we derive a fail-safe guarantee obtained by using a specialized wavelet system, namely the CDJV wavelets \citep{cdjv}, in Algorithm \ref{alg:wave}.

\begin{restatable}{corollary}{cdjv} \label{cor:cdjv}
For a sequence $\theta_{a:b}$ ($a > b$), define $\text{TV}(\theta_{a:b}) := \sum_{j=b+1}^{a} |\theta_j - \theta_{j-1}|$. For a time-point $r$, let $S(r) := \{1,2,4,\ldots,2^{\floor{\log_2 r}} \}$. Let $\tilde U(r) := \max_{t \in S(r)} |\text{TV}(\theta_{t:1}) - \theta_1|) \vee \sigma/\sqrt{r}$ and $\min_{r \in \{1,\ldots,n\}} \tilde U(r) := \tilde U(r^*)$ with $r^*$ being the \emph{smallest} time-point where the equality holds. Define $\kappa = (4\sqrt{2 \log (\log n/\delta)} \vee  2\sqrt{2}) (\log_2 n  + 1)$. By using CDJV wavelet transform  with $2$ vanishing moments in algorithm \ref{alg:wave}, we have with probability at-least $1-\delta$ that
\begin{align}
    |\hat \theta_1  - \theta_1|
    &= O \left(\min \left \{ \sum_{i \in \cI} 6|W_{i,n}| \cdot  (|\beta_i| \wedge \lambda), \kappa \tilde U(r^*) \right \}\right).
\end{align}
\vspace{-0.3in}
\end{restatable}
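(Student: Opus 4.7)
My plan is to establish the two upper bounds inside the $\min$ separately; since Algorithm \ref{alg:wave} produces a single estimate $\hat\theta_1$ whose error is automatically at most the smaller of the two, this is enough. The first bound is essentially free: the CDJV construction on $\{1,\ldots,n\}$ yields an orthonormal wavelet transform (that is its whole purpose), so Lemma \ref{lem:gen-wave} applies verbatim with $\bs W$ set to the CDJV matrix and with the same choice $\lambda = 2\sigma\sqrt{2\log(\log n/\delta)}$ used by the algorithm. This gives the first term $\sum_{i\in\cI} 6|W_{i,n}|(|\beta_i|\wedge\lambda)$ with no additional work.

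For the second bound $\kappa\,\tilde U(r^*)$ I would mirror the proof of Theorem \ref{thm:point-wise}, replacing the Haar-specific steps with the corresponding properties of CDJV wavelets carrying $2$ vanishing moments. The skeleton is: (i) identify the set $\cI$ of coefficients that affect the reconstruction at coordinate $n$, group them by scale $j$, and use the standard CDJV estimate $|W_{i,n}| = O(2^{-j/2})$ for basis elements active at the last coordinate; (ii) bound each true coefficient $|\beta_i|$ at scale $j$ by a local variational functional of $\theta$ on a window of length $t \asymp 2^j$ ending at time $1$; (iii) apply the soft-thresholding $|\beta_i|\wedge\lambda$ to trade $|\beta_i|$ for $\lambda = O(\sigma\sqrt{\log(\log n/\delta)})$ once the window exceeds $r^*$, so that the per-scale contribution becomes at most $\sigma/\sqrt{r^*}$; (iv) sum over the $O(\log_2 n)$ active scales to recover the $\kappa$ prefactor and the $\tilde U(r^*)$ quantity.

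The main obstacle is step (ii), especially at the boundary near time $1$. For Haar, the coefficient at scale $j$ is simply a difference of two local means, which produces the $|\bar\theta_{t:1}-\theta_1|$ term appearing in Theorem \ref{thm:point-wise}. With CDJV's $2$ vanishing moments the wavelet annihilates affine functions, and a discrete summation-by-parts on the support then gives a coefficient bound in terms of $\text{TV}(\theta_{t:1})$ on the window, matching the $\tilde U$ quantity in the corollary. The delicate point is that this decay and polynomial-reproduction behavior must persist \emph{uniformly} for the boundary wavelets near time $1$, where ordinary Daubechies constructions fail; this is precisely what the CDJV construction is engineered to fix, and invoking the published boundary-wavelet estimates for CDJV is the nontrivial technical input. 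Once (ii) is in place, the remaining steps reproduce the bookkeeping of Theorem \ref{thm:point-wise}, and combining with Part 1 yields the claimed $\min$ of the two bounds.
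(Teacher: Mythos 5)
Your proposal matches the paper's proof in structure: the first term of the $\min$ is read off directly from Lemma \ref{lem:gen-wave}, and the second is obtained by rerunning the proof of Theorem \ref{thm:point-wise} with the Haar coefficient bound replaced by a TV-based control on the CDJV coefficients. The only cosmetic difference is that the paper justifies that coefficient control by citing the Besov embedding $\mathcal{TV}(C)\subset B^{1}_{1,\infty}(\nu C)$ of Donoho--Johnstone rather than rederiving it via vanishing moments and summation by parts, which is the same underlying fact.
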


The above corollary guarantees a robust fail-safe estimation error. However, there are some caveats when comparing it against the result given in Theorem \ref{thm:point-wise}. Consider the scenario where the minimum in the bound given in Corollary \ref{cor:cdjv} is achieved by $\tilde U(r^*)$ term. We remark that in such a situation, the bound in Theorem \ref{thm:point-wise} can be tighter because $U(r^*) \le \tilde U(r^*)$. However, as we shall see in Section \ref{sec:npr}, a bound of the form given in Corollary \ref{cor:cdjv} is already sufficient to imply optimal estimation rates for the TV-denoising problem. This illustrates one utility of $\tilde U(r^*)$ type bounds.

\section{Effect of Distribution Shift on the Model Performance}\label{sec:dist_shift}

In the previous section, we proposed wavelet-based methods for estimating quantities during temporal distribution shifts. We now explore the rationale behind developing learning algorithms specifically designed for distribution shift scenarios. This section aims to theoretically demonstrate how and why model performance deteriorates when the training and testing distributions deviate. We first offer a theoretical examination of the machine learning model's loss function and comparing its behavior across training and testing datasets. Through our analysis, we establish upper and lower bounds for the loss function, which are directly influenced by the proportion of dissimilarly distributed/shifted samples. All proofs are deferred to the Appendices.

We first derive the log-loss of an ML model via its predictive distribution $p_{\theta}( y | \mathbf{x}  )$, with input covariates $\mathbf{x}$ and model output $y$, and testing data $D_{Ts}$ distribution, with $\theta$ denotes the model parameters. Note that $p_{\theta}( y | \mathbf{x}  )$ is obtained by training the model with training data $D_{Tr}$. 

\begin{lemma}{(Two Forms of the Log-Loss Function, adapted from \cite{achille2018})}
The log-loss $\mathcal{L}(\theta, D_{Tr}, D_{Ts})$ can be expressed in two forms
\begin{align}
    \mathcal{L}(\theta, D_{Tr}, D_{Ts}) &=  KL\left( p_{D_{Ts}}( y | \mathbf{x}) || p_{\theta}( y | \mathbf{x})\right)\\
    &\quad + \mathcal{H} \left( p_{D_{Ts}}(y | \mathbf{x}) \right) \label{eq:L_first_form}
\end{align}
\begin{align}
    & \mathcal{L}(\theta, D_{Tr}, D_{Ts})  = KL( p_{D_{Ts}}(\mathbf{x}, y) || p_{\theta}( \mathbf{x}, y ) )\\
    & \qquad -  KL( p_{D_{Ts}}(\mathbf{x}) || p_{\theta}( \mathbf{x} ) ) + \mathcal{H} \left( p_{D_{Ts}}(y | \mathbf{x}) \right) \label{eq:L_second_form}
\end{align}
in which $p_{D_{Ts}}( y | \mathbf{x})$ and $p_{\theta}( y | \mathbf{x}))$ are the conditional distribution of the label for testing data and predictive distribution, respectively; $KL\left( p || q \right)$ is the Kullback–Leibler divergence of two distributions $p$ and $q$; and $\mathcal{H} \left( p \right)$ is the entropy of a distribution $p$.
\label{lemma:two_forms}
\end{lemma}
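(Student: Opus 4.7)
The plan is to start from the standard definition of the log-loss as an expected negative log-likelihood under the test distribution, $\mathcal{L}(\theta, D_{Tr}, D_{Ts}) = -\mathbb{E}_{(\mathbf{x},y)\sim D_{Ts}}[\log p_\theta(y\mid \mathbf{x})]$, and then rewrite it in two ways. For the first form, I would insert $\log p_{D_{Ts}}(y\mid \mathbf{x})$ as an additive zero:
\begin{align}
-\log p_\theta(y\mid \mathbf{x}) = \log\frac{p_{D_{Ts}}(y\mid \mathbf{x})}{p_\theta(y\mid \mathbf{x})} - \log p_{D_{Ts}}(y\mid \mathbf{x}),
\end{align}
and take expectation under $(\mathbf{x},y)\sim D_{Ts}$. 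The first term becomes the (conditional) KL divergence $KL(p_{D_{Ts}}(y\mid \mathbf{x}) \,\|\, p_\theta(y\mid \mathbf{x}))$ under the shorthand convention the lemma uses (namely, that KL of conditionals is understood as taking the outer expectation over $\mathbf{x}\sim D_{Ts}$), and the second term becomes $\mathcal{H}(p_{D_{Ts}}(y\mid \mathbf{x}))$, yielding \eqref{eq:L_first_form}.

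For the second form, the key identity is the chain rule for KL divergence applied to joint distributions:
\begin{align}
KL(p_{D_{Ts}}(\mathbf{x},y)\,\|\,p_\theta(\mathbf{x},y)) = KL(p_{D_{Ts}}(\mathbf{x})\,\|\,p_\theta(\mathbf{x})) + KL(p_{D_{Ts}}(y\mid \mathbf{x})\,\|\,p_\theta(y\mid \mathbf{x})).
\end{align}
I would derive this by writing $\log \frac{p_{D_{Ts}}(\mathbf{x},y)}{p_\theta(\mathbf{x},y)} = \log \frac{p_{D_{Ts}}(\mathbf{x})}{p_\theta(\mathbf{x})} + \log \frac{p_{D_{Ts}}(y\mid \mathbf{x})}{p_\theta(y\mid \mathbf{x})}$ and integrating against $p_{D_{Ts}}(\mathbf{x},y)$. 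Rearranging gives $KL(p_{D_{Ts}}(y\mid \mathbf{x})\,\|\,p_\theta(y\mid \mathbf{x})) = KL(p_{D_{Ts}}(\mathbf{x},y)\,\|\,p_\theta(\mathbf{x},y)) - KL(p_{D_{Ts}}(\mathbf{x})\,\|\,p_\theta(\mathbf{x}))$, and substituting into the first form produces \eqref{eq:L_second_form}.

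The derivation is essentially bookkeeping rather than analysis, so there is no genuine obstacle; the only mildly delicate point is the notational convention that $KL(p(y\mid \mathbf{x})\,\|\,q(y\mid \mathbf{x}))$ and $\mathcal{H}(p(y\mid \mathbf{x}))$ implicitly include the outer expectation over $\mathbf{x}\sim D_{Ts}$, so that the equalities hold as written with a scalar on each side. I would state this convention explicitly at the start of the proof to avoid ambiguity, and also implicitly use that $p_\theta(\mathbf{x}) = p_{D_{Tr}}(\mathbf{x})$ (or whatever the modelling assumption is on the input marginal) so that the $KL(p_{D_{Ts}}(\mathbf{x})\,\|\,p_\theta(\mathbf{x}))$ term has the interpretation of a covariate-shift penalty, matching the intent of the lemma as adapted from \cite{achille2018}.
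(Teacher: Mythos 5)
Your proposal is correct and follows essentially the same route as the paper's proof: both derive the first form by adding and subtracting $\log p_{D_{Ts}}(y\mid\mathbf{x})$ inside the expected negative log-likelihood to split it into a conditional KL plus a conditional entropy, and both obtain the second form from the chain rule $KL(p(\mathbf{x},y)\,\|\,q(\mathbf{x},y)) = KL(p(\mathbf{x})\,\|\,q(\mathbf{x})) + KL(p(y\mid\mathbf{x})\,\|\,q(y\mid\mathbf{x}))$. The only difference is that the paper starts from the finite-sample product of likelihoods and rewrites it as an expectation via occurrence counts before decomposing, whereas you start directly from the population expectation; your explicit remark about the outer-expectation convention for conditional KL and entropy is a point the paper leaves implicit.
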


During the model training, we only have access to $D_{Tr}$ and the loss function is measured/tested on a validation data which is a subset of $D_{Tr}$. Substituting $D_{Tr}$ into $D_{Ts}$ in Eq.\eqref{eq:L_first_form}, we have
$\mathcal{L}_{training}(\theta, D_{Tr})$ $\sim$ $KL( p_{D_{Tr}}(y|\mathbf{x}) || p_{\theta}( y|\mathbf{x} ) )$ as $\mathcal{H} \left( p_{D_{Ts}}(y | \mathbf{x}) \right)$ is a constant, which is independent of the ML model and its conditional probability/logit function $p_{\theta}( y|\mathbf{x} )$. To achieve the minimum value for the log-loss, the KL-divergence should be driven towards $0$. 

As the focus is to study the effect of distribution shift, we will assume an infinite number of samples from the training data for perfect model training, and hence neglecting concentration arguments for the ease of presentation.

\begin{assumption} (Perfect ML Training)
To investigate the effect of data distribution shift, we will assume a perfect ML model and training process, i.e. we are able to achieve $p_{\theta}( y| \mathbf{x} ) \equiv p_{D_{Tr}}( y | \mathbf{x})$.
\label{assumption:perfect_ml}
\end{assumption}

Note that without Assumption \ref{assumption:perfect_ml}, the model can train with $D_{Tr}$ distribution but output another distribution $\mathcal{D}$, and if $\mathcal{D} \equiv D_{Ts}$ then a seemingly bad model can perform surprisingly well. In subsequent sections, we will consider more realistic assumptions, i.e. limited training data and imperfect machine learning training processes.

From the first form of the log-loss function, i.e, Eq.\eqref{eq:L_first_form}, it is tempting to already conclude that distribution shift leads to model degradation. However, the first form Eq.\eqref{eq:L_first_form} only refers to the divergence in the predictive distribution and conditional distribution of the label. In practical and real-world scenarios, it is more general and realistic to consider a shift in the dataset/joint probability $(\mathbf{x}, y)$ \cite{joaquin2009, harutyunyan2020}. For example, when we add more samples into the training data, such addition leads to a shift in $p_{D_{Tr}}( \mathbf{x}, y)$ first and foremost instead of directly leading to a shift in $p_{D_{Tr}}( y | \mathbf{x})$. The second form Eq.\eqref{eq:L_second_form} provides the connection between the sample $(\mathbf{x}, y)$ and covariance $\mathbf{x}$ shift.

The first term in Eq.\eqref{eq:L_first_form} and the first two terms in Eq.\eqref{eq:L_second_form} represents the loss due to the distribution shift. That loss is minimizable by optimizing the model parameters $\theta$. The term $\mathcal{H} \left( p_{D_{Ts}}(y | \mathbf{x}) \right)$ represents the intrinsic error \cite{achille2018} to learn a dataset and it is not optimizable. For example, assume that we have an image dataset with two labels ``dog'' and ``cat''. For each image $\mathbf{x}$ we toss a coin and label the image as ``dog'' if head, and as ``cat'' if tail, irrespective of the image itself. In such case, $\mathcal{H} \left( p_{D_{Ts}}(y | \mathbf{x}) \right)$ and the log-loss are very high (in fact, maximized) and the model performance is extremely bad, even if we have perfect ML training and use the test dataset to train the model.

To investigate the effect of distribution shift, we consider two scenarios usually observed in practice. Specifically, we define Training Distribution Shift Scenario as when the testing data is fixed, and we are able to gather more data samples in the past and train the model with new data to make better predictions. Conversely, we define Testing Distribution Shift Scenario when we collect the training data and train the model only once. Afterwards, the model is used for inference for all testing samples in the future. 

\begin{definition} (Training (or Testing) Distribution Shift Scenario)
The training (or testing) data distribution $D_{T}$ consists of two distributions $D_{T1}$ and $D_{T2}$. Denote $\alpha$ and $\beta$ as the proportion of samples from $D_{T1}$  and $\mathcal{D}_{T2}$ in $D_{T}$, respectively with $\alpha + \beta = 1$, we have
\begin{align}
    p_{D_{T}}(\mathbf{x}, y) & = \alpha p_{D_{T1}}(\mathbf{x}, y) +\beta p_{D_{T2}}(\mathbf{x}, y)
    \label{eq:diff_dist_extendable}
\end{align}
Here $T$ represents training data $T \equiv Tr$ (or testing data $T \equiv Ts$) and $T1 \equiv Ts$  (or $T \equiv Tr$). 
\label{def:shifted_scenarios}
\end{definition}

Under the Training Distribution Shift Scenario, the training data $T \equiv Tr$ consists of (1) samples from the testing distribution $T1 \equiv Ts$, and (2) samples from a dissimilar distribution $T2$. When, more and more dissimilarly-distributed samples from $T2$ are added into the training distribution, this leads to higher dissimilarity ratio $\beta$. Under the Testing Distribution Shift Scenario, the testing data $T \equiv Ts$ consists of (1) samples from the training distribution $T1 \equiv Tr$, and (2) samples from a dissimilar distribution $T2$. When, more and more dissimilarly-distributed samples from $T2$ are added into the testing distribution, this leads to higher dissimilarity ratio $\beta$.

We are now ready to investigate the effect of the distribution shift. One of our contributions is the following theorem

\begin{theorem}
    Under Assumption \ref{assumption:perfect_ml} and for distribution shift scenarios specified in Definition \ref{def:shifted_scenarios}, the log-loss is bounded above and below by monotonic non-decreasing functions on $\beta$: $LB(\beta) \leq \mathcal{L}(\theta, D_{Tr}, D_{Ts}) \leq UB(\beta)$.    \label{theorem:upper_lower_bound}
\end{theorem}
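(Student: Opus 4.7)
The strategy is to reduce the theorem, via Assumption~\ref{assumption:perfect_ml} and the first form of the log-loss from Lemma~\ref{lemma:two_forms}, to controlling a conditional KL divergence between a target distribution and a $\beta$-parameterised mixture that involves it. Plugging $p_\theta(y|\mathbf{x}) \equiv p_{D_{Tr}}(y|\mathbf{x})$ into Eq.~\eqref{eq:L_first_form} gives
\[
\mathcal{L}(\theta, D_{Tr}, D_{Ts}) = \E_{\mathbf{x}\sim p_{D_{Ts}}}\!\left[KL\!\left(p_{D_{Ts}}(y|\mathbf{x})\,\|\,p_{D_{Tr}}(y|\mathbf{x})\right)\right] + \mathcal{H}(p_{D_{Ts}}(y|\mathbf{x})).
\]
In the Training Shift case ($T\equiv Tr$), the entropy term is a $\beta$-independent constant, so it suffices to sandwich the expected KL by two monotone non-decreasing functions of $\beta$; the Testing Shift case ($T\equiv Ts$) is symmetric, modulo an extra entropy contribution that we handle separately.

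The core identity I would use is that the joint mixture of Definition~\ref{def:shifted_scenarios} induces a conditional mixture via Bayes' rule:
\[
p_{D_{Tr}}(y|\mathbf{x}) = w(\mathbf{x})\, p_{D_{Ts}}(y|\mathbf{x}) + (1-w(\mathbf{x}))\, p_{D_{T2}}(y|\mathbf{x}), \quad w(\mathbf{x}) := \frac{\alpha\, p_{D_{Ts}}(\mathbf{x})}{\alpha\, p_{D_{Ts}}(\mathbf{x}) + \beta\, p_{D_{T2}}(\mathbf{x})},
\]
and a direct differentiation in $\beta$ (using $\alpha = 1-\beta$) shows that $1-w(\mathbf{x})$ is non-decreasing in $\beta$ for every fixed $\mathbf{x}$. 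For the upper bound, I would apply convexity of $KL$ in its second argument to this mixture to get
\[
KL(p_{D_{Ts}}(y|\mathbf{x})\,\|\,p_{D_{Tr}}(y|\mathbf{x})) \le (1-w(\mathbf{x}))\, KL(p_{D_{Ts}}(y|\mathbf{x})\,\|\,p_{D_{T2}}(y|\mathbf{x})),
\]
then average over $\mathbf{x}\sim p_{D_{Ts}}$ and add the entropy constant to define $UB(\beta)$. For the lower bound, I would chain Pinsker's inequality with the identity $TV(p,\,wp+(1-w)q) = (1-w)\,TV(p,q)$ to obtain
\[
KL(p_{D_{Ts}}(y|\mathbf{x})\,\|\,p_{D_{Tr}}(y|\mathbf{x})) \ge 2(1-w(\mathbf{x}))^2\, TV^2(p_{D_{Ts}}(y|\mathbf{x}),\,p_{D_{T2}}(y|\mathbf{x})),
\]
and integrate over $p_{D_{Ts}}(\mathbf{x})$ to define $LB(\beta)$. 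Both bounds inherit $\beta$-monotonicity from that of $1-w(\mathbf{x})$, since the remaining integrands are $\beta$-independent.

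The Testing Shift case follows the same template with the roles of $p_{D_{Ts}}$ and $p_{D_{Tr}}$ swapped: now $p_{D_{Ts}}(y|\mathbf{x}) = \tilde w(\mathbf{x})\,p_{D_{Tr}}(y|\mathbf{x}) + (1-\tilde w(\mathbf{x}))\,p_{D_{T2}}(y|\mathbf{x})$ with an analogous weight $\tilde w$, and convexity of $KL$ in its first argument (respectively Pinsker) gives the analogue upper and lower bounds on the expected KL. The main obstacle I anticipate is the entropy term $\mathcal{H}(p_{D_{Ts}}(y|\mathbf{x}))$, which is no longer constant in $\beta$: I would bracket it via concavity of entropy together with the log-sum inequality, namely $\tilde w(\mathbf{x})\mathcal{H}(p_{D_{Tr}}) + (1-\tilde w(\mathbf{x}))\mathcal{H}(p_{D_{T2}}) \le \mathcal{H}(p_{D_{Ts}}) \le \tilde w(\mathbf{x})\mathcal{H}(p_{D_{Tr}}) + (1-\tilde w(\mathbf{x}))\mathcal{H}(p_{D_{T2}}) + H_2(\tilde w(\mathbf{x}))$, and then fold the resulting linear-in-$\tilde w$ pieces into $LB(\beta)$ and $UB(\beta)$. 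The delicate piece is the binary-entropy term $H_2(\tilde w(\mathbf{x}))$, which is non-monotone in $\tilde w$; I would sidestep this by upper-bounding it by the constant $\log 2$ before absorption, preserving $UB$ monotonicity, while the lower side does not involve $H_2$ at all.
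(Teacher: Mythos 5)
Your proposal is correct for the training-shift case, and on the upper bound it takes a genuinely different route from the paper. Your lower bound is essentially the paper's argument in disguise: the paper's explicit computation of $|p_{D_{Ts}}(y|\mathbf{x}) - p_{D_{Tr}}(y|\mathbf{x})|$ reduces exactly to your identity $(1-w(\mathbf{x}))\,|p_{D_{Ts}}(y|\mathbf{x}) - p_{D_{T2}}(y|\mathbf{x})|$ with the same Bayes weight $w(\mathbf{x})$, followed by Pinsker; your packaging via the induced conditional mixture is cleaner and avoids the paper's slightly loose summation over $(\mathbf{x},y)$ of unweighted conditional differences. For the upper bound, however, the paper switches to the \emph{second} form of the log-loss (Eq.~\eqref{eq:L_second_form}): it applies joint convexity of $KL$ to the joint mixture $p_{D_{Tr}}(\mathbf{x},y) = \alpha p_{D_{Ts}}(\mathbf{x},y) + \beta p_{D_{T2}}(\mathbf{x},y)$ to get a term $\beta\,KL(p_{D_{Ts}}(\mathbf{x},y)\,\|\,p_{D_{T2}}(\mathbf{x},y))$, subtracts a Pinsker lower bound on the marginal $KL$, and rearranges into $\beta$ times a sum of two nonnegative constants, yielding a bound \emph{linear} in $\beta$. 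You instead stay with the first form and apply convexity of $KL$ in its second argument to the conditional mixture, obtaining $UB(\beta) = \mathbb{E}_{\mathbf{x}}[(1-w_\beta(\mathbf{x}))KL(p_{D_{Ts}}(y|\mathbf{x})\,\|\,p_{D_{T2}}(y|\mathbf{x}))] + \mathcal{H}$, whose monotonicity follows pointwise from that of $1-w_\beta(\mathbf{x})$. This is simpler (no second form, no cancellation between joint and marginal $KL$ terms) and gives an $\mathbf{x}$-adaptive weight rather than the worst-case constant $\beta$; the paper's version buys an explicitly linear $UB(\beta)$. One caution on the testing-shift case: the paper dismisses it with ``swap $Tr$ and $Ts$,'' and you rightly notice that $\mathcal{H}(p_{D_{Ts}}(y|\mathbf{x}))$ and the outer expectation over $\mathbf{x}\sim p_{D_{Ts}}$ are then $\beta$-dependent. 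Your entropy bracketing does not fully close this, though: the linear piece $\tilde w\,\mathcal{H}(p_{D_{Tr}}) + (1-\tilde w)\,\mathcal{H}(p_{D_{T2}})$ is monotone in $\beta$ but not necessarily non-decreasing (its direction depends on the sign of $\mathcal{H}(p_{D_{T2}}) - \mathcal{H}(p_{D_{Tr}})$), so folding it into $LB(\beta)$ can break the claimed monotonicity. Since the paper's own treatment of this case is a single sentence with the same unaddressed issue, this is a defect you share with the source rather than a gap specific to your argument, but it is worth flagging if you write the proof out in full.
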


A direct consequence of Theorem \ref{theorem:upper_lower_bound} is the below corollary
\begin{corollary}\label{coro:distribution_shift_effect}
Under Assumption \ref{assumption:perfect_ml}, the model performs worse when we add more samples from a distribution, which is different from the testing data, into the training data. Conversely, the model performs better when we add more samples from the testing distribution into the training data.    
\end{corollary}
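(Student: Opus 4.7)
The plan is to derive Corollary \ref{coro:distribution_shift_effect} as an immediate interpretation of Theorem \ref{theorem:upper_lower_bound}. First, I would unpack what ``adding more samples'' means in terms of the mixing parameter $\beta$ from Definition \ref{def:shifted_scenarios}. In the Training Distribution Shift Scenario, $T \equiv Tr$, $T1 \equiv Ts$, and $T2$ is the dissimilar distribution; the coefficient $\beta$ is precisely the fraction of training samples drawn from $T2$. Thus augmenting $D_{Tr}$ with additional samples from $T2$ strictly increases $\beta$, while augmenting with additional samples from $T1 \equiv Ts$ strictly decreases $\beta$. The symmetric statement holds in the Testing Distribution Shift Scenario after the obvious relabeling of $T1$ and $T2$.

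Second, I would invoke the monotonicity supplied by Theorem \ref{theorem:upper_lower_bound}: both $LB(\beta)$ and $UB(\beta)$ are non-decreasing in $\beta$. Consequently, whenever $\beta_1 \le \beta_2$, the admissible window $[LB(\beta_1), UB(\beta_1)]$ for the loss sits no higher than $[LB(\beta_2), UB(\beta_2)]$, so every realization of $\mathcal{L}(\theta, D_{Tr}, D_{Ts})$ is forced into a strictly ``worse'' (higher-loss) band as $\beta$ grows. Interpreting this band-shifting as the operational meaning of ``performs worse'' yields the first half of the corollary, and inverting the direction of $\beta$ (decreasing it by adding test-distribution samples to $D_{Tr}$) yields the second.

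The single subtle point, which I would flag as the main obstacle, is that monotonicity of the sandwiching bounds is formally weaker than monotonicity of the loss itself, since in principle $\mathcal{L}$ could wobble inside the bands as $\beta$ varies. Under Assumption \ref{assumption:perfect_ml}, however, the dominant term in Lemma \ref{lemma:two_forms} reduces to a KL between a $\beta$-mixture distribution and its anchor, which is itself monotone in $\beta$ for fixed $D_{T1}, D_{T2}$; this monotonicity is what feeds into the proof of Theorem \ref{theorem:upper_lower_bound} and what the corollary inherits. Once this is acknowledged, no further computation is required and the corollary drops out directly.
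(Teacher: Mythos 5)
Your derivation is correct and is essentially the paper's own: Corollary \ref{coro:distribution_shift_effect} is presented there as a direct consequence of Theorem \ref{theorem:upper_lower_bound}, read exactly as you read it — adding dissimilar (resp.\ similar) samples raises (resp.\ lowers) $\beta$, the monotone lower bound $LB(\beta)$ shows the guaranteed floor on the loss rises with $\beta$, and the monotone upper bound $UB(\beta)$ shows the ceiling falls as $\beta$ shrinks, which is the paper's operational meaning of ``performs worse/better.'' One caveat on your final paragraph: the paper neither claims nor proves monotonicity of $\mathcal{L}$ itself in $\beta$ — its proof of Theorem \ref{theorem:upper_lower_bound} establishes only that the Pinsker-based lower bound and the convexity-based upper bound are monotone, and the corollary rests on that band interpretation alone — so your stronger assertion that the KL along the mixture path is itself monotone, while provable (e.g.\ from convexity of $t \mapsto KL(p\,\|\,(1-t)p+tq)$ together with its vanishing value and derivative at $t=0$), is asserted rather than proven in your sketch and is not what actually feeds into the paper's argument.
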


Theorem \ref{theorem:upper_lower_bound}, established under Assumption \ref{assumption:perfect_ml} (which assumes an idealized machine learning model and training process), provides best-case performance boundaries. The lower bound thus reveals an inherent limitation: regardless of the sophistication of our learning algorithm, model performance will decline as the dissimilarity ratio $\beta$ increases. Despite recent literature suggesting that doing naive ERM might perform adequately across different distributions, our analysis demonstrates that predictive power will substantially deteriorate in worst case, as training and testing dsitribution diverge. It is worth noting that we need both lower and upper bounds of the log-loss. The reason is that the lower bound in Theorem \ref{theorem:upper_lower_bound} can only says how worse the loss can become. The further from 0, the worse the loss is. When $\beta$, the lower bound reduces to the conclusion that the loss is larger than $0$ (see Appendix \ref{appen:upper_lower_bound}), which is trivial. Only an upper-bound confirms that the model gets better when having more similarly distributed training samples versus the testing data.

\section{Learning Under Temporal Distribution Shift} \label{sec:supervised}

In the previous section, our analysis reveals the performance decline inherent when training and testing distributions differ. Consequently, developing robust learning algorithms that can mitigate this degradation is an important research objective. In this section, we study the fundamental statistical problem of binary classification under training distribution shift. We are given access to $n$ covariate-target pairs: $(\mathbf{x}_n,y_n), \ldots, (\mathbf{x}_2,y_2), (\mathbf{x}_1,y_1)$ along with a hypothesis class $\cF$. We assume that for each $i \in [1, \ldots, n]$, $(\mathbf{x}_i,y_i)$ is independently sampled from a distribution $D_i$. Let $\ell(f(\mathbf{x}),y) := \mathbb I \{f(\mathbf{x}) \neq y \}$ be the loss associated with a hypothesis $f \in \cF$ on the example $(\mathbf{x},y)$ where $\mathbb I$ is the binary indicator function. The population level loss of a hypothesis $f$ on the testing distribution $D_1$ is $L_f := E_{(\mathbf{X},Y) \sim D_1}[\ell(f(\mathbf{X}),Y)]$. We are interested in finding a hypothesis whose population level loss on testing Distribution $D_1$  is close to $\min_{f \in \cF} L_f := L_{f_*}$ for some $f_* \in \cF$. More precisely we are interested in finding a hypothesis $\hat f$ whose excess risk given by $L_{\hat f} - L_{f_*}$ is controlled.

This differs from standard statistical learning, where we assume \( n \) i.i.d. samples from \( D_1 \) are used to learn a hypothesis \citep{Bousquet2004}. Since we observe only a single labelled datapoint from \( D_1 \), pooling recent data based on shift severity is crucial. The challenge is determining how much past data to use for training, without prior insight into the shift's intensity. To address this, we perform ERM using the loss estimates from Section~\ref{sec:estimation}, leading to the following excess risk control.


\begin{restatable}{theorem}{erm} \label{thm:erm}
For a function $f \in \cF$, we defined obtain its estimated loss $\hat \ell_f$ as follows. Run Algorithm \ref{alg:wave} with data $\ell(f(\mathbf{x}_n),y_n), \ldots, \ell(f(\mathbf{x}_1),y_1)$ as input. Let $\hat \ell_f$ be the estimate returned by Algorithm \ref{alg:wave}. The ERM is defined by $ \hat f \in \text{argmin}_{f \in \cF} \hat \ell_f$.

For a function $f \in \cF$, let $\bs \theta^f$ $:=$ $[E[l(f(\mathbf{X}_n),y_n)]$, $\ldots$, $E[l(f(\mathbf{X}_1),y_1)]]^T$. Let $\bs \beta^f := \bs W \theta^f$ for a wavelet transform matrix $\bs W$. Consider the index set  $\cI$ defined in Lemma \ref{lem:gen-wave}. Let $d$ be the VC dimension of $\cF$. We have with probability at-least $1-\delta$,
\begin{align}
    & L_{\hat f} - L_{f_*}
    \le 8 \sqrt{\frac{2d \log (2n)}{n}} + 2 \sqrt{\frac{2 \log (3/\delta)}{n}} \\
    & + \sqrt{3} \cdot \sup_{f \in \cF} \sum_{i \in \mathcal I} 6 |W_{i,n}|\cdot (|\beta^f_i| \wedge  2\sigma \sqrt{4d \log(3\log 2n/\delta)}).
\end{align}

\end{restatable}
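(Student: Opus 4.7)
My plan is to combine the single-function denoising guarantee of Lemma \ref{lem:gen-wave} with a VC-type uniform convergence argument and a three-way split of the failure probability. Since $\hat f$ is the empirical risk minimizer of $\{\hat\ell_f\}_{f\in\cF}$, we have $\hat\ell_{\hat f}\le\hat\ell_{f_*}$, which yields the standard excess-risk decomposition
\begin{align}
L_{\hat f}-L_{f_*}\le \sup_{f\in\cF}\,(L_f-\hat\ell_f)+(\hat\ell_{f_*}-L_{f_*}).
\end{align}
The first piece is a uniform deviation over $\cF$, the second a pointwise deviation at $f_*$. I would split the failure probability $\delta$ into three equal sub-events, which accounts for the $\log(3/\delta)$ factors and the $\sqrt{3}$ prefactor of the wavelet term in the final statement.

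Each $Z_i^f:=\ell(f(\mathbf{x}_i),y_i)$ is Bernoulli and hence $(1/2)$-sub-Gaussian, so Lemma \ref{lem:gen-wave} applied at $f_*$ delivers the single-function bound immediately. For the supremum over $\cF$, the critical reduction is that Algorithm \ref{alg:wave} depends on the data only through the binary loss vector $\bs Z^f\in\{0,1\}^n$; by Sauer--Shelah, the number of distinct such vectors realized across $\cF$ is at most $\Pi_\cF(n)\le(2n)^d$. A union bound over these $\Pi_\cF(n)$ loss patterns and over the $|\cI|=O(\log n)$ wavelet coordinates that influence $\hat\ell_f$, applied to the sub-Gaussian tail on the noise in each empirical wavelet coefficient, enlarges the effective soft-threshold to $2\sigma\sqrt{4d\log(3\log(2n)/\delta)}$, precisely the value inside the min operator in the wavelet term. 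The additional $8\sqrt{2d\log(2n)/n}$ and $2\sqrt{2\log(3/\delta)/n}$ terms arise from separately controlling the low-frequency (father-wavelet) component of $\hat\ell_f$, which is proportional to the empirical mean $n^{-1}\sum_i Z_i^f$: Massart's finite-class lemma combined with Sauer--Shelah delivers the uniform bound, while Hoeffding's inequality delivers the pointwise bound at $f_*$.

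The main technical obstacle is that $\hat\ell_f$ is a nonlinear functional of the data through the soft-threshold $T_\lambda$, so standard Rademacher contraction cannot be invoked directly on $f\mapsto\hat\ell_f$. I would circumvent this by performing the uniform sub-Gaussian concentration at the level of the \emph{linear} empirical wavelet coefficients $\tilde\beta_i^f=(\bs W\bs Z^f)_i$---each of which is an orthonormal combination of bounded independent variables and hence $\sigma$-sub-Gaussian---before applying the soft-threshold, then invoking the per-coordinate estimate $|T_\lambda(\tilde\beta_i^f)-\beta_i^f|\le 3\,(|\beta_i^f|\wedge\lambda)$ that underlies Lemma \ref{lem:gen-wave}. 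Collecting the three sub-events via the union bound, summing the resulting wavelet errors across $i\in\cI$ with the weights $|W_{i,n}|$, and invoking the triangle inequality with $L_f=\theta_1^f$ delivers the stated excess-risk bound.
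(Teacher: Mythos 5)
Your overall route is the same as the paper's: the ERM decomposition $L_{\hat f}-L_{f_*}\le \sup_f|L_f-\hat\ell_f|+|L_{f_*}-\hat\ell_{f_*}|$, a three-way split of $\delta$, Lemma~\ref{lem:gen-wave} applied per function with a threshold inflated by a Sauer--Shelah union bound (which is exactly where the $2\sigma\sqrt{4d\log(3\log 2n/\delta)}$ comes from), and Rademacher/Massart bounds for the remaining empirical-mean deviations. So in spirit this is the paper's proof.

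There is, however, one step where your shortcut does not go through as written. You propose to union-bound the wavelet-coefficient concentration over the at most $(2n)^d$ binary loss patterns $\bs Z^f\in\{0,1\}^n$ realized by $\cF$. But the tail event you need to control for each $f$ concerns the noise $\tilde\beta_i^f-\beta_i^f=(\bs W(\bs Z^f-\bs\theta^f))_i$, which depends on $f$ not only through the realized pattern $\bs Z^f$ but also through the population vector $\bs\theta^f$; two hypotheses with identical loss patterns on the sample can have different $\bs\theta^f$ and hence define different events. A union bound over patterns therefore does not cover all of $\cF$. The paper's fix is the ghost-sample device: it introduces an i.i.d.\ copy from $D_1$, reduces $\sup_{f\in\cF}|\ell_{f,n}-\hat\ell_f|$ to a $(2n)$-point shattering set $\cF'$ (both quantities are determined by behavior on the $2n$ points), and only then re-introduces $L_f$ at the cost of a second Rademacher term --- this is precisely why the constant in the first term is $8\sqrt{2d\log(2n)/n}$ rather than $4\sqrt{2d\log(2n)/n}$. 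Relatedly, your attribution of that term to a separate treatment of the father-wavelet component does not match the theorem as stated: the approximation coefficient is already included in the sum over $\cI$ in the wavelet term, so handling it separately would change the form of the bound. To close the argument you should either incorporate the symmetrization step or otherwise justify reducing the supremum over $\cF$ to a finite subclass for an event that involves the population quantities $\bs\theta^f$.
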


Assuming that the wavelet transform in Theorem \ref{thm:erm} is the Haar transform and applying Theorem \ref{thm:point-wise}, we obtain the following corollary.

\begin{restatable}{corollary}{ermhaar} \label{cor:erm-haar}
Let $\bar D_t$ $=$ $\frac{1}{t} \sum_{i=1}^T D_i$. Define $\|\bar D_t - D_1 \|_{\cF}$ $:=$ $\sup_{f \in \cF} | E_{(X,Y) \sim \bar D_t}[\ell(f(X),Y)]$ $-$ $E_{(X,Y) \in D_1}[\ell(f(X),Y)] | $ as the discrepancy measure.
Consider $\bs W$ in Theorem \ref{thm:erm} to be the Haar transform matrix. Then the excess risk is bounded with probability at-least $1-\delta$ by
\begin{align}
    L_{\hat f} - L_{f_*}
    &\le 8 \sqrt{\frac{2d \log (2n)}{n}} + 2 \sqrt{\frac{2 \log (3/\delta)}{n}} +\\
    &\quad \kappa \cdot \min_{r \in [n]} \left( \max_{t \in S(r)} \| \bar D_t - D_1 \|_{\cF}) \vee \sqrt{d/r} \right),
\end{align}
where $\kappa = (4\sqrt{4\log (3 \log 2n/\delta)} \vee  2\sqrt{2}) (\log_2 n  + 1)$ and $S(r)$ is as defined in Theorem \ref{thm:point-wise}.
\end{restatable}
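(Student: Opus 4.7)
The plan is to compose Theorem~\ref{thm:erm} (instantiated with the Haar transform) with the purely deterministic translation of Haar wavelet sparsity into variational moduli that drives the proof of Theorem~\ref{thm:point-wise}, and then pass a supremum over $\cF$ through that translation. First I invoke Theorem~\ref{thm:erm} with $\bs W$ equal to the orthonormal Haar transform matrix. The two ``statistical'' summands $8\sqrt{2d\log(2n)/n}$ and $2\sqrt{2\log(3/\delta)/n}$ already appear verbatim in the target statement, so what remains is to re-express the wavelet-sparsity residual
$$R:=\sqrt 3\cdot\sup_{f\in\cF}\sum_{i\in\cI}6|W_{i,n}|\bigl(|\beta_i^f|\wedge\lambda\bigr),\qquad \lambda:=2\sigma\sqrt{4d\log(3\log 2n/\delta)},$$
in terms of the $\cF$-discrepancy $\|\bar D_t - D_1\|_\cF$.

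For this, I fix an arbitrary $f\in\cF$ and re-use the deterministic core of the proof of Theorem~\ref{thm:point-wise}: for any bounded sequence $\bs\theta$ with Haar coefficients $\bs\beta$ and any $r\in[n]$, the inequality
$$\sum_{i\in\cI}6|W_{i,n}|\bigl(|\beta_i|\wedge\lambda\bigr)\le 2\sqrt 2\,(\log_2 n+1)\,\Bigl(\max_{t\in S(r)}|\bar\theta_{t:1}-\theta_1|\ \vee\ \lambda\cdot 2^{-\lfloor\log_2 r\rfloor/2}\Bigr)$$
holds. This uses only that the Haar wavelet coefficients whose support covers the last index live at the $\lfloor\log_2 n\rfloor+1$ dyadic scales, that each such coefficient weighted by $|W_{i,n}|=2^{-j/2}$ realizes a signed difference of partial means of $\bs\theta$ on lengths in $S(r)$, and that the cap $\wedge\lambda$ commutes linearly through these weights. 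Applying the inequality to the deterministic loss sequence $\bs\theta^f$ gives the desired bound uniformly inside the supremum.

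Next I take $\sup_{f\in\cF}$ of both sides. Because $S(r)$ is a finite set the supremum in $f$ exchanges with the maximum in $t$, and the definition of the $\cF$-discrepancy gives
$$\sup_{f\in\cF}|\bar\theta_{t:1}^f-\theta_1^f|=\sup_{f\in\cF}\bigl|E_{\bar D_t}\ell(f(X),Y)-E_{D_1}\ell(f(X),Y)\bigr|=\|\bar D_t-D_1\|_\cF,$$
where the first equality comes from $\theta_i^f = E_{D_i}[\ell(f(X),Y)]$ and linearity of expectation. Substituting, and using $\lambda\cdot 2^{-\lfloor\log_2 r\rfloor/2}\lesssim 2\sigma\sqrt{4\log(3\log 2n/\delta)}\sqrt{d/r}$, the residual $R$ is at most $\kappa\bigl(\max_{t\in S(r)}\|\bar D_t-D_1\|_\cF\vee\sqrt{d/r}\bigr)$ with the advertised $\kappa$. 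Since $r$ never appeared on the left-hand side I am free to minimize it over $[n]$, and combining with the two standard terms from Theorem~\ref{thm:erm} yields the claim.

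The only genuinely delicate piece is the constant bookkeeping: one must verify that after pushing the VC-adjusted threshold $\lambda$ through the $2^{-j/2}$ weights and summing over the $(\log_2 n+1)$ Haar scales, the remaining noise floor recombines \emph{exactly} into $\kappa\sqrt{d/r}$ with $\kappa=(4\sqrt{4\log(3\log 2n/\delta)}\vee 2\sqrt 2)(\log_2 n+1)$, and in particular that no extraneous $\sqrt{\log n}$ factor sneaks in from the dyadic summation. This is where I expect to spend the most attention, but it is a mechanical calculation once the composition in the previous two paragraphs is set up.
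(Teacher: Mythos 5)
Your proposal is correct and matches the paper's intended argument: the paper derives this corollary exactly by instantiating Theorem \ref{thm:erm} with the Haar transform and then running the deterministic translation from the proof of Theorem \ref{thm:point-wise} on the population-loss sequence $\theta_i^f = E_{D_i}[\ell(f(X),Y)]$, using $\bar\theta_{t:1}^f - \theta_1^f = E_{\bar D_t}[\ell(f)] - E_{D_1}[\ell(f)]$ and a supremum over $\cF$. The only cosmetic difference is that you establish the variational bound for every $r$ directly (capping coarse-scale coefficients by $\lambda/\sqrt{r}$ and fine-scale ones by differences of partial means in $S(r)$), whereas the paper argues only at the minimizer $r^*$ via its minimality; since the final bound carries a $\min_{r\in[n]}$, both are sufficient.
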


\textbf{Discussion of our results in contrast to prior work.}
We remark that an optimal answer to the problem of classification under temporal training distribution shift has been given by \citet{mazzetto2023} in the form of a bound similar to that in Corollary \ref{cor:erm-haar}. Their solution relies on making $O(\log n)$ calls to an ERM oracle to obtain a minimizer that can control the excess risk under the distribution $D_1$. In contrast, our solution requires \emph{exactly one} call to the ERM oracle. The tradeoff for this computational improvement is the inflation of the excess risk bound of the prior work only by a factor of $O(\log n)$. We remark that the result in Theorem \ref{thm:erm} is more general than the prior work since it connects the excess risk to sparsity (and hence the degree of temporal stationarity) of the sequence of population level losses in a wavelet transformed domain. Hence this bound can be potentially much sharper than those obtained by constraining the wavelet transform to be Haar. A fail-safe guarantee similar to that of Corollary \ref{cor:cdjv} can be obtained by using CDJV wavelets.

We note that the logarithmic improvement in computation stems from a simple but careful technical observation. More precisely, one can reduce the computational complexity of the procedure in \citet{mazzetto2023} by estimating the loss of each function $f \in \cF$ separately while performing the ERM. This is realizable by taking the input function class to their algorithm to be the singleton set $\{ \ell_f\}$ with $\ell_f$ defined by the mapping $\ell_f: \cX \times \mathcal Y \rightarrow \ell(f(\mathbf{x}),y)$. However, doing so poses a major challenge when optimizing with surrogate losses. We elaborate on this aspect as follows.

Though the result in Theorem \ref{thm:erm} concerns with binary zero-one loss, one is often interested in solving the ERM (possibly sub-optimally) via the use of differentiable surrogate losses and optimizing the objective via gradient descent. Given a differentiable surrogate loss, one can form an ERM objective based on soft-thresholding the surrogate loss sequence similar to what is done in Theorem \ref{thm:erm}. Such an objective is differentiable almost everywhere. In contrast, estimating the loss of each function separately using the procedure from \citet{mazzetto2023} renders the ERM objective non-differentiable. The non-differentiability stems from the fact that their estimator is constructed based on the output of a sequence of data-dependent binary comparisons.

\section{Optimal Error Rates for Total-Variation Denoising} \label{sec:npr}
In this section, we prove a black-box result that \emph{any} algorithm that satisfies bounds of the type studied in Theorem \ref{thm:point-wise} or Corollary \ref{cor:cdjv} can imply optimal estimation error rates for TV-denoising problem. To the best of our knowledge, such an implication has been not uncovered before in literature. Apart from being a result of theoretical interest, it helps uncover the previously unknown optimality of some existing algorithms for the problem of TV-denoising.

In TV-denoising problem, we are given access to $n$ observations of the form $ y_t = \theta_t + \epsilon_t, \quad t=1,\ldots,n$ where $\epsilon_t$ are i.i.d. $\sigma$-sub-gaussian observations. Our goal is to form estimates $\hat \theta_{1:n}$ of the groundtruth $\theta_{1:n}$ from the noisy observations. To impose regularity/structure to the underlying ground truth, one can define the Total Variation (TV) class as follows.

\begin{align}
    \mathcal{TV}(C)
    &= \left \{\theta_{1:n} : \sum_{t=2}^n |\theta_t - \theta_{t-1}| \le C \right \},
\end{align}
wherein the quantity $C$ is viewed as the \emph{radius} of the TV class. We assume that the groundtruth sequence satsifies $\theta_{1:n} \in \mathcal{TV}(C)$. We remark that one can also define alternate sequence classes based on penalising the sum of squared differences of a sequence. For eg. $\sum_{t=2}^n |\theta_t - \theta_{t-1}|^2 \le \tilde C^2$. We defer the reader to \citet{arrows2019} for an exposition on why the TV class is more expressive and statistically challenging to estimate than such sequence classes.

To measure the quality of risk estimates, we focus on two commonly used risk functionals in practice as follows: $R_{\text{sq}}(\hat \theta_{1:n}, \theta_{1:n}) = \sum_{t=1}^n (\hat \theta_t - \theta_t)^2$ and $R_{\text{abs}}(\hat \theta_{1:n}, \theta_{1:n}) = \sum_{t=1}^n |\hat \theta_t - \theta_t|$.

We are interested in coming up with estimators that can attain this min-max error lower-bounds (upto log terms).

\begin{restatable}{theorem}{thmtv} \label{thm:tv}
Suppose that an algorithm satisfies a bound of the form given in Theorem \ref{thm:point-wise} or Corollary \ref{cor:cdjv}. Consider running the algorithm iteratively to produce the estimates $\hat \theta_{1:n}$. Then with probability at-least $1-\delta$ we have that
\begin{align}
    R_{\text{sq}}(\hat \theta_{1:n}, \theta_{1:n})
    &= \tilde O(n^{1/3} C^{2/3} \sigma^{4/3})\\
    R_{\text{abs}}(\hat \theta_{1:n}, \theta_{1:n})
    &= \tilde O(n^{2/3} C^{1/3} \sigma^{2/3}).
\end{align}

\end{restatable}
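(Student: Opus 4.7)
The plan is to invoke the pointwise bound at every time step via a union bound, and then convert the resulting pointwise guarantees into a global risk bound through an oracle partition argument tailored to the total-variation class.

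First I would apply the algorithm iteratively: for each $t \in [n]$, run it on the suffix $[y_n,\ldots,y_t]$ to produce $\hat\theta_t$. Applying Theorem \ref{thm:point-wise} (or Corollary \ref{cor:cdjv}) with failure probability $\delta/n$ and union-bounding gives, with probability at least $1-\delta$ and uniformly over $t$, $|\hat\theta_t-\theta_t| \le \kappa_n \min_r U_t(r)$, where $\kappa_n = \tilde O(\mathrm{polylog}\,n)$ absorbs the extra $\log n$ from the union bound and $U_t(r) = \max_{s \in S(r)} |\bar\theta_{t+s-1:t} - \theta_t| \vee \sigma/\sqrt r$. Since $\bar\theta_{t+s-1:t}$ is a uniform convex combination of $\theta_t,\ldots,\theta_{t+s-1}$, a telescoping argument yields $|\bar\theta_{t+s-1:t} - \theta_t| \le \text{TV}(\theta_{t+s-1:t})$, which is monotone in $s$, so for every dyadic $r$, $U_t(r) \le \text{TV}(\theta_{t+r-1:t}) + \sigma/\sqrt r$.

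The key step is an oracle partition argument. For any integer $k \ge 1$, greedily split $\{1,\ldots,n\}$ into $k$ consecutive intervals $I_1,\ldots,I_k$ with $\text{TV}$ at most $C/k$ inside each (possible because the total $\text{TV}$ is at most $C$). For every $t \in I_j$, let $d_t$ denote the forward distance from $t$ to the far endpoint of $I_j$, and pick $r_t$ to be the largest dyadic integer with $r_t \le d_t$, so that $r_t \ge d_t/2$, the window $[t,t+r_t-1]$ lies inside $I_j$, and $\text{TV}(\theta_{t+r_t-1:t}) \le C/k$. Substituting and using the harmonic bound $\sum_{t \in I_j} 1/d_t = O(\log |I_j|)$ gives $\sum_{t=1}^n |\hat\theta_t - \theta_t|^2 \le 2\kappa_n^2 \bigl( n(C/k)^2 + O(k\sigma^2 \log n) \bigr)$. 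Choosing $k \asymp (nC^2/(\sigma^2 \log n))^{1/3}$ balances the two terms and produces the claimed $\tilde O(n^{1/3} C^{2/3} \sigma^{4/3})$ squared-risk bound. For the absolute loss, the same partition together with $\sum_{t \in I_j} 1/\sqrt{d_t} = O(\sqrt{|I_j|})$ and Cauchy--Schwarz $\sum_j \sqrt{|I_j|} \le \sqrt{kn}$ gives $nC/k + \tilde O(\sigma \sqrt{kn})$, which is minimized at $k \asymp n^{1/3} C^{2/3}/\sigma^{2/3}$ to deliver $\tilde O(n^{2/3} C^{1/3} \sigma^{2/3})$.

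The main obstacle I anticipate is executing the dyadic rounding and boundary bookkeeping cleanly (in particular making sure each chosen window fits inside its interval after rounding, and arguing that the maximum over $s \in S(r)$ inside $U_t$ is dominated by the $\text{TV}$ of the largest window, which follows from monotonicity of $\text{TV}$ in the window size). The Corollary \ref{cor:cdjv} variant follows the same template with essentially no extra work, since its quantity $\tilde U(r)$ already expresses the bias directly as a local $\text{TV}$, so the bias-variance trade-off inside each partition cell proceeds verbatim.
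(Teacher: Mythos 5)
Your proposal is correct and rests on the same core template as the paper's proof: invoke the pointwise bound of Theorem \ref{thm:point-wise} at each time step, partition the horizon into low-variation segments, control the bias by the segment's local TV and the variance by a harmonic (resp.\ square-root) sum within each segment, and finish by counting segments. The one substantive difference is the partition itself: you use a uniform partition into $k$ cells each of TV at most $C/k$ and optimize $k$ at the end, whereas the paper builds a greedy partition in which each bin satisfies $C_{i_s\rightarrow i_t}\le \sigma/\sqrt{n_i}$ (so bias is dominated by the in-bin variance term automatically) and then bounds the number of bins $M$ via Jensen's inequality, obtaining $M\lesssim n^{1/3}C^{2/3}\sigma^{-2/3}$ directly with no final tuning step. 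Both partitions are standard and yield identical rates; the paper's version avoids the explicit bias term $n(C/k)^2$ in the sum, while yours avoids the slightly delicate counting argument for a data-dependent partition. A genuine value-add of your writeup is that you explicitly take failure probability $\delta/n$ per time step and union-bound to get the guarantee uniformly over $t$, absorbing the extra $\sqrt{\log n}$ into $\kappa$; the paper's proof applies the pointwise bound at every $j$ simultaneously without stating this step. Your dyadic-rounding bookkeeping ($r_t\ge d_t/2$, window contained in the cell, and domination of the max over $S(r)$ by the TV of the largest window) is also handled correctly.
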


These rates can be shown to be optimal modulo log terms \citep{donoho1996density}. We note that the algorithms in \citet{mazzetto2023,han2024} already satisfy a bound of the form in Theorem \ref{thm:point-wise}. Thus Theorem \ref{thm:tv} uncovers new off-the-shelf optimal algorithms for the TV-denoising problem.

\begin{table*}[t]  
    \centering
    \label{tab:known}
    \begin{minipage}{0.44\textwidth}  
        \centering
        \scriptsize  
        \renewcommand{\arraystretch}{0.85}  
        \begin{tabular}{|c|c|c|c|}
        \hline
        \begin{tabular}[c]{@{}c@{}}Noise\\ Level\end{tabular} & AVG  & \begin{tabular}[c]{@{}c@{}}HAAR\\ (\emph{This paper})\end{tabular} & \begin{tabular}[c]{@{}c@{}}DB8\\ (\emph{This paper})\end{tabular} \\ \hline
        $0.2$ & $0.205 \pm 0.0041$ & $0.094 \pm 0.0014$ & $\mathbf{0.0661 \pm 0.0026}$ \\ \hline
        $0.3$ & $0.202 \pm 0.0098$ & $0.154 \pm 0.0087$ & $\mathbf{0.091 \pm 0.0040}$ \\ \hline
        $0.5$ & $0.209 \pm 0.0077$ & $0.191 \pm 0.0065$ & $\mathbf{0.119 \pm 0.0042}$ \\ \hline
        $0.7$ & $0.208 \pm 0.0082$ & $0.182 \pm 0.0077$ & $\mathbf{0.147 \pm 0.0051}$ \\ \hline
        $1$   & $0.214 \pm 0.0049$ & $\mathbf{0.174 \pm 0.0100}$ & $0.212 \pm 0.0062$ \\ \hline
        \end{tabular}
        \caption*{(a) Ground truth: Random}
    \end{minipage}
    \hspace{0.5cm}
    \begin{minipage}{0.44\textwidth}  
        \centering
        \scriptsize
        \renewcommand{\arraystretch}{0.85}
        \begin{tabular}{|c|c|c|c|}
        \hline
        \begin{tabular}[c]{@{}c@{}}Noise\\ Level\end{tabular} & AVG  & \begin{tabular}[c]{@{}c@{}}HAAR\\ (\emph{This paper})\end{tabular} & \begin{tabular}[c]{@{}c@{}}DB8\\ (\emph{This paper})\end{tabular} \\ \hline
        $0.2$ & $0.511 \pm 0.0013$ & $0.068 \pm 0.0035$ & $\mathbf{0.007 \pm 0.0002}$ \\ \hline
        $0.3$ & $0.510 \pm 0.0020$ & $0.103 \pm 0.0060$ & $\mathbf{0.013 \pm 0.0003}$ \\ \hline
        $0.5$ & $0.509 \pm 0.0035$ & $0.155 \pm 0.0113$ & $\mathbf{0.032 \pm 0.0007}$ \\ \hline
        $0.7$ & $0.508 \pm 0.0052$ & $0.155 \pm 0.0162$ & $\mathbf{0.061 \pm 0.0014}$ \\ \hline
        $1$   & $0.508 \pm 0.0081$ & $0.155 \pm 0.0224$ & $\mathbf{0.129 \pm 0.0028}$ \\ \hline
        \end{tabular}
        \caption*{(b) Ground truth: Doppler}
    \end{minipage}
    \caption{MSE of different algorithms when the noise standard deviation is known. Groundtruth signals used are shown in Fig.\ref{fig:gt} (Appendix \ref{app:numeric}).}\label{tab:known}
\end{table*}

\begin{table*}[h]
    \centering
    \begin{minipage}{\textwidth}
        \centering
\begin{tabular}{|c|c|c|c|c|c|}
\hline
\begin{tabular}[c]{@{}c@{}}Noise\\ Level\end{tabular} & AVG                & ARW                & Aligator           & \begin{tabular}[c]{@{}c@{}}HAAR\\ (\emph{This paper})\end{tabular}               & \begin{tabular}[c]{@{}c@{}}DB8\\ (\emph{This paper})\end{tabular}                         \\ \hline
$0.2$                                                 & $0.210 \pm 0.0045$          & $0.203 \pm 0.0079$          & $0.214 \pm 0.0086$ & $0.243 \pm 0.0218$ & $\mathbf{0.188 \pm 0.0155}$ \\ \hline
$0.3$                                                 & $0.204 \pm 0.0060$          & $0.205 \pm 0.0079$          & $0.214 \pm 0.0071$ & $0.258 \pm 0.0224$ & $\mathbf{0.188 \pm 0.0191}$ \\ \hline
$0.5$                                                 & $\mathbf{0.210 \pm 0.0080}$ & $0.214 \pm 0.0078$          & $0.212 \pm 0.0066$ & $0.271 \pm 0.0087$ & $0.214 \pm 0.0058$          \\ \hline
$0.7$                                                 & $\mathbf{0.209 \pm 0.0081}$ & $0.205 \pm 0.0053$          & $0.222 \pm 0.0070$ & $0.270 \pm 0.0081$ & $0.238 \pm 0.0143$          \\ \hline
$1$                                                   & $0.219 \pm 0.0062$          & $\mathbf{0.211 \pm 0.0115}$ & $0.213 \pm 0.0103$ & $0.289 \pm 0.0196$ & $0.307 \pm 0.0151$          \\ \hline
\end{tabular}
\caption*{(a) Ground truth: Random}
    \end{minipage}
    \vspace{0.5cm}
    
    \begin{minipage}{\textwidth}
        \centering
\begin{tabular}{|c|c|c|c|c|c|}
\hline
\begin{tabular}[c]{@{}c@{}}Noise\\ Level\end{tabular} & AVG                & ARW                & Aligator           & \begin{tabular}[c]{@{}c@{}}HAAR\\ (\emph{This paper})\end{tabular}               & \begin{tabular}[c]{@{}c@{}}DB8\\ (\emph{This paper})\end{tabular}                         \\ \hline
$0.2$                                                 & $0.512 \pm 0.0021$ & $0.3844 \pm 0.0052$ & $0.234 \pm 0.0021$ & $0.053 \pm 0.0017$ & $\mathbf{0.0204 \pm 0.0007}$ \\ \hline
$0.3$                                                 & $0.512 \pm 0.0032$ & $0.389 \pm 0.0077$  & $0.234 \pm 0.0032$ & $0.056 \pm 0.0018$ & $\mathbf{0.0265 \pm 0.0010}$ \\ \hline
$0.5$                                                 & $0.512 \pm 0.0053$ & $0.400 \pm 0.0136$  & $0.235 \pm 0.0055$ & $0.065 \pm 0.0018$ & $\mathbf{0.0444 \pm 0.0016}$ \\ \hline
$0.7$                                                 & $0.512 \pm 0.0074$ & $0.402 \pm 0.0181$  & $0.235 \pm 0.0078$ & $0.072 \pm 0.0031$ & $\mathbf{0.070 \pm 0.0021}$  \\ \hline
$1$                                                   & $0.514 \pm 0.0106$ & $0.407 \pm 0.0215$  & $0.235 \pm 0.0111$ & $\mathbf{0.088 \pm 0.0035}$ & $0.129 \pm 0.0058$  \\ \hline
\end{tabular}
        \caption*{(b) Ground truth: Doppler}
    \end{minipage}
    \caption{MSE of different algorithms when the noise standard deviation is unknown.}\label{tab:unknown}
\end{table*}


\section{Experimental Results}
In this section, we validate our findings on synthetic and real-world data.

\subsection{Experiments on Synthetic Data} \label{sec:exp}

In this section, we report the results obtained from simulation studies. As discussed before, the algorithm from \citet{mazzetto2023} aims to capture the non-stationarity in terms of local averages of a sequence. This can potentially over-smooth/under-smooth trends in the ground-truth which may not always assume a piece-wise constant structure. In this section, we show that Algorithm \ref{alg:wave} is superior in capturing a wider range of trend patterns, while  Lemma \ref{lem:gen-wave} and Theorem \ref{thm:point-wise} achieves faster estimation error rates.

\textbf{Baselines.} 1) DB8 is the version of Algorithm \ref{alg:wave} that uses Daubechies wavelet with $8$ vanishing moments. 2) HAAR is the version of Algorithm \ref{alg:wave} that uses Haar wavelets. 3) AVG is the algorithm from \citet{mazzetto2023}. 4) ARW is the algorithm from \citet{han2024}. 5) Aligator is the trend forecasting algorithm (hedged version) from \citet{aligator}.

\textbf{Experimental methodology.} First a ground-truth signal is generated. We considered two types of ground truth signal as shown in Fig.\ref{fig:gt}. The Random signal features a signal which can be either $0$ or $1$ based on the output a fair coin flip and Doppler features a trend with spatially varying degree of smoothness. All signals lie within the range $[-1.5,1.5]$. The ground-truth sequence is generated by sampling these signals at $500$ equispaced points. To generate the observations, we add independent uniformly sampled noise from $[-B,B]$ to the ground-truth signal for each of the $500$ time-points.  The quantity $B$ is varied across the levels $\{0.2,0.3,0.5,0.7,1 \}$ to study the behaviour of the algorithms across different noise-levels. At any time-point, the baselines (except Aligator) use all the observations from history including the current time-point to estimate the ground-truth. The Mean Squared Error (MSE) across all $500$ time-points are measured. All experiments were conducted across $5$ trials. We report the average MSE and associated standard deviations across the trials. The failure probability parameter for all algorithms is set to be $0.1$.

We report the results by considering two scenarios. In the first case, the standard deviation of the noise is exactly known. In this case we adjust the constants in Algorithms \ref{alg:wave} and that of AVG according to their theoretically optimal values. Next, we consider the case of no prior knowledge of the standard deviation. The algorithms ARW and Aligator are directly applicable to this scenario since they do not require prior knowledge of the standard deviation. As a consequence, for a fair comparison, we report another table in which AVG and wavelet based thresholding algorithms are executed via bounding or estimating the standard deviation while displaying alongside the results from algorithms ARW and Aligator. For the wavelet-based algorithms, an estimate of the standard deviation is formed based on the Median Absolute Deviation (MAD) of the wavelet coefficients at the highest resolution similar to as done in \citet{donoho1998minimax}. For AVG, a bound on the observed values are used as a proxy. The results are reported for the known standard deviation and the case of unknown deviation respectively in Tables~\ref{tab:known} and \ref{tab:unknown}.

\textbf{Interpretation of results.} For the case of known standard deviation, the wavelet-based algorithms are found to perform better than AVG. In almost all cases, we see significant improvements via the usage of wavelets across various noise levels with biggest improvements observed in the high signal-noise-ratio regime. This validates the ability of wavelets to produce sharp estimation error rates. In particular, the usage of higher order DB8 wavelet produces lowest estimation error in most cases since the DB8 wavelet system can capture complex trends more effectively than the HAAR system (which is also known as the DB1 wavelet which is of lower order than DB8). Similar observations are carried forward to the unknown standard deviation regime. Both Aligator and wavelet based methods outperforms ARW and AVG in most cases. A fundamental difference between these algorithms is that ARW and AVG are based on the estimate produced by a single predictor, while Aligator and wavelet based algorithms ensembles a set of base predictors to form the final prediction. For wavelet based methods, the base predictors are the wavelet bases and ensemble weights are precisely the denoised wavelet coefficients. Intuitively, the ensemble nature of these algorithms enables to outperform the counterparts by achieving an improved variance reduction. We remark that we used theoretically recommended values for constants that guide the restart/selection decisions in AVG and ARW. Tuning those constants in an online setting is otherwise unclear. The algorithm Aligator is known to produce minimax optimal MSE rates for estimating a ground truth signal from noisy observations \citep{aligator}. However, a point-wise bound on the estimation quality similar to Theorem \ref{thm:point-wise} is not known for Aligator. In contrast, the methods HAAR and DB8 comes with strong point-wise error bounds (Lemma \ref{lem:gen-wave} and Theorem \ref{thm:point-wise}) which translates to improved performance in practice. In the unknown standard deviation regime, we see that AVG and ARW performs slightly better than HAAR and DB8 for certain noise levels for the Random signal. AVG and ARW are designed to detect (nearly) stationary portions in a signal and average the stable portions to compute the final estimates. Hence Random signal by virtue of its piecewise constant nature becomes a favourable prior to AVG and ARW performing slightly better than other methods with when provided with noisy estimates of the standard deviation. However, in almost all cases for Random signal, the performance of all algorithms are roughly comparable (with difference only in the second significant digit, see Table \ref{tab:unknown}) which in turn emphasizes the robustness of ensemble based methods like Aligator, HAAR and DB8 even when the standard deviation is unknown. Further empirical results are deferred to Appendix \ref{app:numeric}.

\subsection{Experiments on Real Data}

As an application of our proposed methods, we conduct a model selection experiment using real-world data. We evaluate our method on data from the Dubai Land Department \citep{dubai_data} following the setup identical to that of [2]. The dataset includes apartment sales from January 2008 to December 2023 (192 months). Each month is treated as a time period, where the goal is to predict final prices based on apartment features. Data is randomly split into $20\%$ test, with two train-validation splits: (a) $79\%-1\%$ and (b) $75\%-5\%$.

For each month $t$, we train Random Forest \citep{breiman2001random} and XGBoost \citep{chen2016xgboost} models using a window of past data where we consider window sizes $w \in [1,4,16,62,256]$, yielding 10 models per month. Validation mean squared error (MSE) from past and current months are used to refine the current month’s estimate of MSE via Algorithm \ref{alg:wave} or ARW from \citet{han2024}. The refined validation scores are used to select the best model for final MSE evaluation on test data. We report the average MSE of this model selection scheme over 192 months and 5 independent runs, comparing our method to ARW. In Table~\ref{tab:dubai}, HAAR and DB8 are versions of Algorithm \ref{alg:wave} with the corresponding wavelet basis given as input.

\begin{table}[t]
\centering
\begin{tabular}{l|c|c}
\toprule
\textbf{Method} & \textbf{79\%-1\% Split} & \textbf{75\%-5\% Split} \\
\midrule
ARW      & $0.0790 \pm 0.0005$ & $\bs{0.0719 \pm 0.0005}$ \\
HAAR (ours)         & $\bs {0.0722 \pm 0.0006}$ & $0.0736 \pm 0.0011$ \\
DB8 (ours)          & $0.0762 \pm 0.0002$ & $0.0768 \pm 0.0008$ \\
\bottomrule
\end{tabular}
\caption{Average test MSE $\pm$ standard error on the Dubai Land Department dataset over 192 months and 5 runs.}\label{tab:dubai}
\end{table}

\begin{figure}
    \centering
    \includegraphics[width=0.7\linewidth]{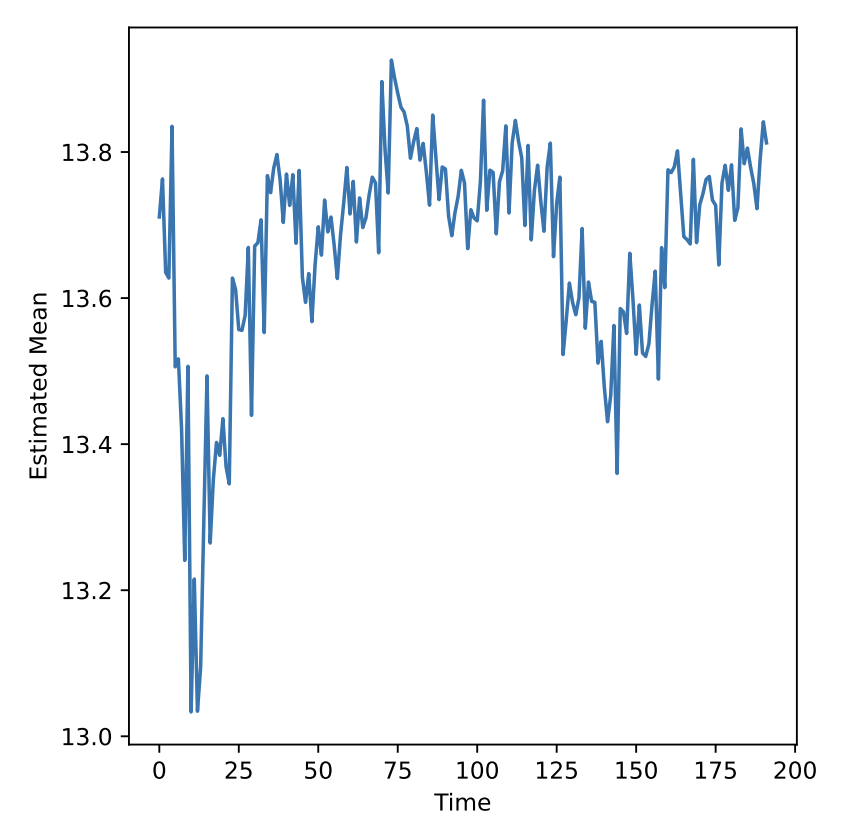}
    \caption{Mean house prices vs time in Dubai housing data (reproduced from \citet{han2024}). The pricing trend is abrupt and highly non-stationary.}
    \label{fig:dubai}
\end{figure}

We see that wavelet based methods shine especially when the validation data is scarce. This allows us to include more data for training while still allowing to obtain high quality estimates for validation scores. Such a property can be especially helpful in data-scarce regimes. Unlike the synthetic data experiments, here we find that Haar wavelets perform better than DB8. This can be attributed to the following facts: i) the noise in the observations depart from iid sub-gaussian assumption and; ii) the high degree of non-stationarity in the pricing data as indicated by Fig.\ref{fig:dubai} makes the underlying trends to have a low degree piecewise polynomial structure. This render the groundtruth irregular (or less smooth) which can be suitably handled by lower order Haar wavelets which are also less smooth and abrupt (see Fig.\ref{fig:db}).

\subsection{Notes to Practitioners about Wavelet Selection}
We close off the experimental section by few remarks about how one can select a wavelet system in practice for the use in Algorithm \ref{alg:wave}. Practitioners typically analyze data trends— if they follow a piecewise polynomial of degree $k$,  a wavelet of order $k+1$ is chosen, as it effectively models such structures. However, higher-order wavelets introduce numerical instabilities and variance, though the latter can be mitigated with more data. Selecting a wavelet basis is akin to choosing a kernel in Gaussian Processes—application-specific and guided by practical intuition. A more comprehensive exposition can be obtained from \citet{wavetour}.

\section{Conclusion, Limitations and Future Work}
We introduced a wavelet-based approach for estimating non-stationary time series with strong point-wise error guarantees. Our solution offers a fairly general perspective to this problem compared to prior works. Using Haar wavelets recovers prior results, while advanced wavelets enable estimation of complex trends. Our findings extend to binary classification, leading to efficient algorithms, and uncover new optimal methods for TV-denoising. Experiments on real and synthetic data show notable improvements over previous work.

There are some limitations that we leave to be addressed in future works. Principled ways to select an appropriate wavelet system for a given task of interest needs additional investigation. Unlike prior approaches that use adaptive window sizes for moving averages, our method implicitly leverages relevant data portions. However, maintaining a window-size can have meaningful implications to change-point detection.  Exploring the utility of our methods to this extension remains as an interesting direction to explore.

\section*{Acknowledgments}
We thank Jiaxing Zhang and Alessio Mazzetto for insightful early discussions that helped shape the direction of this work.

\section*{Impact Statement}
This paper presents work whose goal is to advance the field of 
Machine Learning. There are many potential societal consequences 
of our work, none which we feel must be specifically highlighted here.

\bibliography{tf,yx}
\bibliographystyle{plainnat}

\newpage
\onecolumn
\appendix
\section{Related Work} \label{app:lit}

\textbf{Effect of Distribution Shift} It is commonly believed that when the training data differs from the testing data, the model's performance will degrade \cite{joaquin2009, harutyunyan2020, ziqiao2023, haoran2022, federici2021, achille2018, cai2022}. Several studies attempted to connect the distance of the training and testing distributions with the expected errors \cite{long1998, bendavid2010, mazzetto2023}, by upper-bounding the expected errors for certain function classes. In contrast, we analyze the model loss function directly and provide both the upper- and lower-bounds, thus concluding once and for all that the model performance degrades as the misalignment between training and testing data increases, and vice versa.  

\textbf{Rolling Window Based Estimation Techniques.} As mentioned in Section \ref{sec:intro}, a natural way to balance the bias-variance trade-off in estimating $\theta_1$ is to use data from a carefully selected, \emph{but then fixed}, window of most recent observations. Works such as \citep{hanneke2015,mohri2012,bifet2007,mazzetto2023,han2024} follow this direction. In contrast, our algorithm, based on wavelet-denoising, does not explicitly maintain a window size, yet achieve optimal estimation error rates.

\textbf{Non-Stationary Online Learning.} There are various studies from non-stationary online learning that aim to track a moving target of ground-truth \citep{zinkevich2003online,zhang2018adaptive,Cutkosky2020ParameterfreeDA,besbes2015non,jadbabaie2015online,yang2016tracking,Mokhtari2016OnlineOI,chen2018non,arrows2019,Zhao2020DynamicRO,daniely2015strongly,jun2017coin,hazan2007adaptive,Baby2021OptimalDR,baby2022optimal, baby2020higherTV,baby2023secondorder, baby2023online,baby2025deep}. These works aim to estimate a moving target while controlling a cumulative performance metric such as MSE or (dynamic) regret. To the best of our knowledge, none of these works lead to point-wise estimation error guarantees which are strictly stronger than controlling cumulative performance metrics. We remark that our methods can also be used as a sub-routine for the online estimation problem by iteratively executing our algorithm for each round.

\textbf{Total-Variation Denoising.} The field of TV denoising studies the problem of estimating a ground truth sequence from noisy observations. Unlike the fixed-dimensional estimation procedures, no parametric structure is imposed, e.g., linear w.r.t. some covariates. Instead, the ground-truth is assumed to belong to an abstract class of sequences whose total variation is bounded by some unknown value. There are several algorithms that are known to be minimax optimal for this task \citep{locadapt,l1tf,trendfilter,wang2015trend,guntuboyina2018constrainedTF}. In this paper we prove a fairly general result that any algorithm that satisfies a bound of the form given in Theorem \ref{thm:point-wise} is by default an optimal algorithm for the TV-denoising problem.

\section{Preliminaries}\label{app:prelims}
In this section, we give a brief introduction to wavelets. We focus only on some details that are relevant to the discussion in this paper and refer the readers to \citet{wavetour,DJBook} for a comprehensive overview.

\subsection{Discrete Haar Wavelet Transform}

The discrete Haar wavelet transform (HWT) is a fundamental method in signal processing, designed to decompose a signal into components that represent its average and detailed variations. It operates efficiently by leveraging the simplicity of the Haar basis functions, making it particularly suitable for computational tasks.

\subsubsection{Coefficients in the Haar Wavelet Transform}

The HWT splits a signal into one \textit{approximation coefficient} and multiple \textit{detail coefficients}. These coefficients are computed using the \textit{Haar transform matrix} \( H \), which operates on a signal \( x \in \mathbb{R}^n \), where \( n \) is a power of 2. 

\begin{itemize}
    \item \textbf{Approximation Coefficient:} The \textit{first row} of the Haar transform matrix computes the approximation coefficient, which represents the global average (low-frequency content) of the signal. This coefficient aggregates information across the entire signal, providing a coarse summary.
    \item \textbf{Detail Coefficients:} The \textit{remaining rows} of the Haar transform matrix compute the detail coefficients, which capture the differences (high-frequency content) at progressively finer scales. These coefficients reveal variations within smaller and smaller segments of the signal.
\end{itemize}

The rows of the matrix represents (discrete) wavelets. As an example, for a signal of length \( n = 8 \), the orthonormal Haar transform matrix \( H \) is:
\[
H = 
\begin{bmatrix}
\frac{1}{\sqrt{8}} & \frac{1}{\sqrt{8}} & \frac{1}{\sqrt{8}} & \frac{1}{\sqrt{8}} & \frac{1}{\sqrt{8}} & \frac{1}{\sqrt{8}} & \frac{1}{\sqrt{8}} & \frac{1}{\sqrt{8}} \\[0.5em]
\frac{1}{\sqrt{8}} & \frac{1}{\sqrt{8}} & \frac{1}{\sqrt{8}} & \frac{1}{\sqrt{8}} & -\frac{1}{\sqrt{8}} & -\frac{1}{\sqrt{8}} & -\frac{1}{\sqrt{8}} & -\frac{1}{\sqrt{8}} \\[0.5em]
\frac{1}{\sqrt{4}} & \frac{1}{\sqrt{4}} & -\frac{1}{\sqrt{4}} & -\frac{1}{\sqrt{4}} & 0 & 0 & 0 & 0 \\[0.5em]
0 & 0 & 0 & 0 & \frac{1}{\sqrt{4}} & \frac{1}{\sqrt{4}} & -\frac{1}{\sqrt{4}} & -\frac{1}{\sqrt{4}} \\[0.5em]
\frac{1}{\sqrt{2}} & -\frac{1}{\sqrt{2}} & 0 & 0 & 0 & 0 & 0 & 0 \\[0.5em]
0 & 0 & \frac{1}{\sqrt{2}} & -\frac{1}{\sqrt{2}} & 0 & 0 & 0 & 0 \\[0.5em]
0 & 0 & 0 & 0 & \frac{1}{\sqrt{2}} & -\frac{1}{\sqrt{2}} & 0 & 0 \\[0.5em]
0 & 0 & 0 & 0 & 0 & 0 & \frac{1}{\sqrt{2}} & -\frac{1}{\sqrt{2}}
\end{bmatrix}.
\]

\begin{itemize}
    \item The \textit{first row} computes the approximation coefficient.
    \item All other rows compute differences between groups of values at various levels of resolution (detail coefficients).
\end{itemize}

\begin{figure}
    \centering
    \includegraphics[scale=0.5]{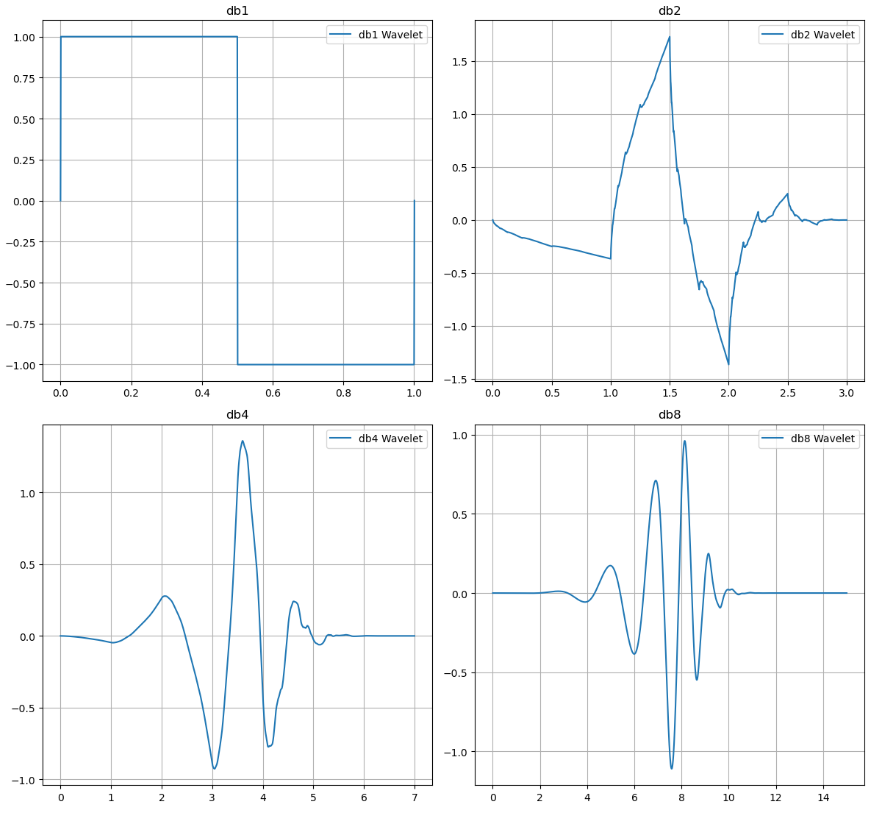}
    \caption{\emph{Daubechies (DB) wavelets with increasing number of vanishing moments. We can see that the Haar system is a special case of the DB system with 1 vanishing moment. As we increase the number of vanishing moments, the wavelets get smoother.}}
    \label{fig:db}
\end{figure}

\subsubsection{Multi-Resolution Representation}

The HWT follows a multi-resolution scheme where the detail coefficients are computed at various scales, progressively halving the resolution at each step. This hierarchical representation enables the efficient analysis and reconstruction of signals, capturing both coarse and fine features.

For a signal of length $n$, we can decompose the detail wavelets into $\log_2n$ resolution/levels. At a reslution $j \in \{0,\ldots,\log_2 n - 1 \}$, there are $2^j$ wavelets. These detail wavelets can be thought of obtained by scaling and translation of a mother wavelet. For Haar transform, the mother wavelet assumes the form

\begin{align}    
\psi(t) = 
\begin{cases} 
1, & 0 < t \le n/2, \\
-1, & n/2 < t \le n, \\
0, & \text{otherwise}.
\end{cases}
\end{align}

where $t \in \{1,\ldots,n \}$. Detail wavelets at resolution $j$ are obtained as follows:
\begin{align}
    \psi_{j,k}(t) = \sqrt{\frac{2^j}{n}} \psi(2^jt - nk), \: k \in \{0,\ldots, 2^j-1 \}.
\end{align}

Consequently one can use a dyadic indexing scheme to refer to the associated wavelet coefficients. We can see that any wavelet at resolution $j$ has a support length of $n/2^j$. Further, different wavelets at same resolution have disjoint support. Due to these properties one can see that when we decompose the signal $x \in \mathbb R^n$ using the wavelet basis, only $\log_2 n + 1$ wavelets affects the value of the signal at any index $i$ which is $x[i]$. Here the count is $\log_2 n + 1$ because of the inclusion of the approximation wavelet as well (the first row of the HWT matrix).

\subsection{Other Wavelet Systems}
While the discrete Haar wavelet transform (HWT) provides a foundational introduction to wavelet analysis, it is one specific case within the broader family of wavelet transforms. Many wavelet transforms have been developed to address limitations of the Haar transform, such as its lack of smoothness and inability to capture higher-order variations or trends in signals. These alternative wavelets offer enhanced smoothness, compact support, and better time-frequency localization.

\subsubsection{Daubechies Wavelets}
The Daubechies wavelets, introduced by Ingrid Daubechies, are a prominent family of compactly supported orthogonal wavelets. These wavelets generalize the Haar transform by using smoother basis functions, making them ideal for representing signals with continuous or slowly varying features. Each wavelet in the family is denoted as DBN where N is the number of vanishing moments. Higher-order Daubechies wavelets (DB2, DB4, etc.) offer better smoothness, which is advantageous in applications such as image processing and audio compression. The Haar wavelets is precisely the db1 system.

\subsubsection{CDJV Wavelets}
A special wavelet construction scheme based on Daubechies wavelets was proposed in \citet{cdjv}. These wavelet functions enjoy near or exact symmetry in contrast to the Daubechies wavelets. The CDJV system is particularly useful in theoretical analysis due to equivalence between functional norms that capture smoothness of functions and Besov norms of the associated wavelet coefficients. A remarkable property is that the Total variation of a function can be completely represented using weighted sum of  L1 norm (with weights larger than $1$) of its associated CDJV wavelet coefficients at each resolution. We refer the reader to \citet{donoho1998minimax,DJBook} for an elaborate treatment. A high level overview of the construction of CDJV wavelets can be found in \citet{qian2024}.

\section{Proof of Lemma \ref{lemma:two_forms}}\label{appen:two_forms}

We adapt \cite{achille2018}[Proposition C.2] to prove the first form of the log-likelihood function. The original \cite{achille2018}[Proposition C.2] lacks clarity in differentiating between $D_{Tr}$ and $p_{\theta}( y | \mathbf{x} )$. Through our revision, we can articulate Assumption \ref{assumption:perfect_ml} with greater precision.

We have
\begin{align}
    \mathcal{L}(\theta, D_{Tr}, D_{Ts}) 
    & = - \frac{1}{K} \log \prod_{(\mathbf{x}_k, y_k) \in D_{Ts}} 
    p_{\theta}( Y = y_k | \mathbf{x}_k  )
    = - \frac{1}{K} \log \prod_{(\mathbf{x}_{k}, y_{k}) \in D_{Ts}} 
    p_{\theta}( Y = y_{k} | \mathbf{x}_{k}  )^{\text{occurrence number of} (\mathbf{x}_{k}, y_{k})} \nonumber \\
    & = - \frac{1}{K} \log \prod_{(\mathbf{x}_{k}, y_{k}) \in D_{Ts}} 
    p_{\theta}( Y = y_{k} | \mathbf{x}_{k}  )^{ON(\mathbf{x}_{k}, y_{k})} 
    = - \frac{1}{K} \sum_{(\mathbf{x}_{k}, y_{k}) \in D_{Ts}} ON(\mathbf{x}_{k}, y_{k}) \log p_{\theta}( Y = y_{k} | \mathbf{x}_{k}) \nonumber \\
    & = - \sum_{(\mathbf{x}_{k}, y_{k}) \in D_{Ts}} \frac{ON(\mathbf{x}_{k}, y_{k})}{K} \log p_{\theta}( Y = y_{k} | \mathbf{x}_{k}) 
    = - \sum_{(\mathbf{x}_{k}, y_{k}) \in D_{Ts}} p_{D_{Ts}}(\mathbf{x}_{k}, y_{k}) \log p_{\theta}( Y = y_{k} | \mathbf{x}_{k})
    \label{eq:L_part_1}
\end{align}
where the last step comes from the fact that $(\mathbf{x}_{k}, y_{k})$ is drawn from the testing distribution $D_{Ts}$. Continuing with Eq.\eqref{eq:L_part_1}, we obtain the first form of $\mathcal{L}(\theta, D_{Tr}, D_{Ts})$
\begin{align}
    \mathcal{L}(\theta, D_{Tr}, D_{Ts})
    & = - \sum_{k} p_{D_{Ts}}(\mathbf{x}_{k}) p_{D_{Ts}}(Y = y_{k} | \mathbf{x}_{k})  \log \left ( \frac{p_{\theta}( Y = y_{k} | \mathbf{x}_{k})}{p_{D_{Ts}}( Y = y_{k} | \mathbf{x}_{k})} \right )  \nonumber \\
    & \qquad - \sum_{k} p_{D_{Ts}}(\mathbf{x}_{k}) p_{D_{Ts}}(Y = y_{k} | \mathbf{x}_{k}) \log p_{D_{Ts}}( Y = y_{k} | \mathbf{x}_{k})  \nonumber \\
    & =  \mathcal{E} \left[ p_{D_{Ts}}(y | \mathbf{x})  \log \frac{p_{D_{Ts}}( y | \mathbf{x})}{p_{\theta}( y | \mathbf{x})} \right] + \mathcal{H} \left( p_{D_{Ts}}(y | \mathbf{x}) \right)  \nonumber \\
    & =  KL\left( p_{D_{Ts}}( y | \mathbf{x}) || p_{\theta}( y | \mathbf{x})\right) + \mathcal{H} \left( p_{D_{Ts}}(y | \mathbf{x}) \right)
    \label{eq:L_final_conditional}
    \vspace{-0.17in}
\end{align}
in which $KL\left( p_{D_{Ts}}( y | \mathbf{x}) || p_{\theta}( y | \mathbf{x})\right)$ is the Kullback–Leibler divergence of $p_{D_{Ts}}(y | \mathbf{x})$ and $p_{\theta}( y | \mathbf{x} )$, and $\mathcal{H} \left( p_{D_{Ts}}(y | \mathbf{x}) \right)$ is the entropy of conditional distribution $p_{D_{Ts}}(y | \mathbf{x})$. 

To obtain the second form of the log likelihood, we apply the chain rule of KL divergence 
\begin{align}
KL(p(x,y) || q(x,y)) & = KL(p(x) || q(x)) + KL(p(y|x) || q(y|x))
\end{align}

And thus, we have
\begin{align}
    \mathcal{L}(\theta, D_{Tr}, D_{Ts}) 
    = KL( p_{D_{Ts}}(\mathbf{x}, y) || p_{\theta}( \mathbf{x}, y ) ) -  KL( p_{D_{Ts}}(\mathbf{x}) || p_{\theta}( \mathbf{x} ) ) + \mathcal{H} \left( p_{D_{Ts}}(y | \mathbf{x}) \right)
    \label{eq:L_final}
    \vspace{-0.17in}
\end{align}


\section{Proof of Theorem \ref{theorem:upper_lower_bound}} \label{appen:upper_lower_bound}

Before proving the lemma, we note the following Theorems

\begin{theorem}{(Pinsker's Inequality)} \cite{cover2006}[Lemma 11.6.1]
Given two probability mass functions $p$ and $q$, the information/KL divergence is related to the variation distance via the following inequality 
\begin{align}
   KL(p || q) 
   \geq \frac{1}{2ln 2} ||p - q||_1^2 = \frac{1}{2ln 2} \left( \sum_{\mathbf{x}, y} |p(\mathbf{x}, y) - q(\mathbf{x}, y)|  \right)^2
\end{align}
\label{theo:pinsker_ineq}
\end{theorem}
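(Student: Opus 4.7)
The plan is to establish Pinsker's inequality by the classical two-step reduction: first, collapse the general inequality to a two-point (Bernoulli) inequality via the log-sum inequality, and then prove the two-point inequality directly by a short calculus argument.

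For the reduction, I would partition the sample space into $A = \{(\mathbf{x},y) : p(\mathbf{x},y) \ge q(\mathbf{x},y)\}$ and $A^c$, and set $P := \sum_{(\mathbf{x},y) \in A} p(\mathbf{x},y)$, $Q := \sum_{(\mathbf{x},y) \in A} q(\mathbf{x},y)$. Since $p$ and $q$ are probability mass functions, $\sum_{(\mathbf{x},y)}(p - q) = 0$, so the positive and negative parts of $p-q$ carry equal $\ell_1$ mass; hence $P - Q = \tfrac{1}{2}\|p - q\|_1$. The log-sum inequality applied separately on $A$ and on $A^c$ and then added yields
\begin{align}
    KL(p\,\|\,q) \;\ge\; P\log_2\frac{P}{Q} + (1-P)\log_2\frac{1-P}{1-Q} \;=\; KL\bigl(\mathrm{Bern}(P)\,\|\,\mathrm{Bern}(Q)\bigr),
\end{align}
which is the familiar data-processing/coarsening step.

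It then remains to prove the two-point bound
\begin{align}
    h(Q) \;:=\; KL\bigl(\mathrm{Bern}(P)\,\|\,\mathrm{Bern}(Q)\bigr) - \tfrac{2}{\ln 2}(P-Q)^2 \;\ge\; 0
\end{align}
for every $P, Q \in (0,1)$. Fixing $P$ and differentiating in $Q$, a short computation gives
\begin{align}
    h'(Q) \;=\; \frac{(P-Q)\bigl(4Q(1-Q)-1\bigr)}{Q(1-Q)\,\ln 2}.
\end{align}
The factor $4Q(1-Q) - 1 = -(2Q-1)^2$ is nonpositive, so $h'(Q)$ has the sign of $Q - P$. Combined with $h(P) = 0$, this shows that $h$ attains its minimum (equal to $0$) at $Q = P$, and is therefore nonnegative throughout $(0,1)$. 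The boundary values $Q \in \{0,1\}$ are handled by the standard convention that makes $KL$ infinite unless $P$ matches, so the inequality holds trivially there.

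Chaining the two steps gives $KL(p\,\|\,q) \ge \tfrac{2}{\ln 2}(P-Q)^2 = \tfrac{1}{2\ln 2}\|p-q\|_1^2$, which is exactly the claim. The main subtlety I anticipate is simply the bookkeeping of $\ln 2$ factors arising from working in base-$2$ entropy; the log-sum reduction and the single-variable monotonicity argument in the Bernoulli case are both standard and short, and they compose cleanly to deliver the stated constant.
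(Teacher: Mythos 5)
Your proof is correct, and it is essentially the classical argument from the cited reference (Cover and Thomas, Lemma 11.6.1/12.6.1): reduce to the Bernoulli case via the log--sum inequality on the partition $\{p\ge q\}$ versus its complement, then verify the two-point bound by differentiating in $Q$ and noting $4Q(1-Q)-1=-(2Q-1)^2\le 0$. The paper itself does not prove this statement --- it is imported verbatim as a known auxiliary result used in the proof of Theorem~\ref{theorem:upper_lower_bound} --- so there is no in-paper proof to diverge from; your derivation, including the $\tfrac{1}{2\ln 2}$ constant from working in base-$2$ logarithms and the identity $P-Q=\tfrac12\|p-q\|_1$, checks out.
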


\begin{theorem}{(Convexity of Relative Entropy)} \cite{cover2006}[Theorem 2.7.2]
$KL(p||q)$ is convex in the pair $(p, q)$; that is, if $(p_1, q_1)$ and $(p_2, q_2)$ are two pairs of probability mass functions, then
\begin{align}
   & KL(\lambda p_1 + (1 - \lambda)p_2 || \lambda q_1 + (1 - \lambda)q_2)  \leq \lambda KL(p_1 || q_1) + (1 - \lambda) KL(p_2 || q_2) 
\end{align}
for all $0 \leq \lambda \leq 1$.
\label{theo:convex_kl}
\end{theorem}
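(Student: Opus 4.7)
The plan is to apply Assumption~\ref{assumption:perfect_ml} to the second form of the log-loss given by Lemma~\ref{lemma:two_forms}, and then exploit the convex mixture structure of the shift scenarios in Definition~\ref{def:shifted_scenarios}. Under the perfect-training assumption we have $p_{\theta}(\mathbf{x},y) = p_{D_{Tr}}(\mathbf{x},y)$ and hence $p_{\theta}(\mathbf{x}) = p_{D_{Tr}}(\mathbf{x})$, so the log-loss simplifies to
\begin{align}
\mathcal{L} = KL\bigl(p_{D_{Ts}}(\mathbf{x},y)\,\|\,p_{D_{Tr}}(\mathbf{x},y)\bigr) - KL\bigl(p_{D_{Ts}}(\mathbf{x})\,\|\,p_{D_{Tr}}(\mathbf{x})\bigr) + \mathcal{H}\bigl(p_{D_{Ts}}(y|\mathbf{x})\bigr).
\end{align}
The entropy term is $\beta$-independent, so the task reduces to controlling the two remaining KL terms as functions of $\beta$.

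For the upper bound I would feed the decomposition from Definition~\ref{def:shifted_scenarios} into the joint KL and invoke Theorem~\ref{theo:convex_kl}. In the training-shift scenario, writing $p_{D_{Ts}} = \alpha p_{D_{Ts}} + \beta p_{D_{Ts}}$ as a trivial mixture and $p_{D_{Tr}} = \alpha p_{D_{Ts}} + \beta p_{D_{T2}}$, convexity of KL in the pair yields
\begin{align}
KL\bigl(p_{D_{Ts}}\,\|\,p_{D_{Tr}}\bigr) \le \alpha \cdot 0 + \beta\, KL\bigl(p_{D_{Ts}}\,\|\,p_{D_{T2}}\bigr).
\end{align}
Discarding the non-negative subtracted marginal KL term then gives
\begin{align}
UB(\beta) = \beta\, KL\bigl(p_{D_{Ts}}(\mathbf{x},y)\,\|\,p_{D_{T2}}(\mathbf{x},y)\bigr) + \mathcal{H}\bigl(p_{D_{Ts}}(y|\mathbf{x})\bigr),
\end{align}
which is linear and manifestly non-decreasing in $\beta$. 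The testing-shift case is symmetric after swapping the roles of $p_{D_{Tr}}$ and $p_{D_{Ts}}$ in the mixture decomposition.

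For the lower bound I would apply Pinsker's inequality (Theorem~\ref{theo:pinsker_ineq}) to the joint KL. The key algebraic observation is that $p_{D_{Ts}} - p_{D_{Tr}} = \beta\,(p_{D_{Ts}} - p_{D_{T2}})$, so the total-variation distance scales linearly in $\beta$ and yields $KL(\text{joint}) \ge \tfrac{\beta^{2}}{2\ln 2}\,\|p_{D_{Ts}}-p_{D_{T2}}\|_{1}^{2}$. To control the subtracted marginal KL, I would reuse Theorem~\ref{theo:convex_kl} on the marginals to get $KL\bigl(p_{D_{Ts}}(\mathbf{x})\,\|\,p_{D_{Tr}}(\mathbf{x})\bigr) \le \beta\, KL\bigl(p_{D_{Ts}}(\mathbf{x})\,\|\,p_{D_{T2}}(\mathbf{x})\bigr)$. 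Combining these two estimates produces a candidate bound of the form $\tfrac{\beta^{2}}{2\ln 2}\,C_{1} - \beta\,C_{2} + \mathcal{H}$.

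The main obstacle will be monotonicity of the lower bound: the $-\beta C_{2}$ contribution from the convex upper bound on the marginal KL can dominate the quadratic Pinsker term near $\beta = 0$. The paper's own remark that at small $\beta$ the lower bound ``reduces to the conclusion that the loss is larger than $0$'' signals the intended resolution, namely to report
\begin{align}
LB(\beta) = \max\!\left\{\tfrac{\beta^{2}}{2\ln 2}\,C_{1} - \beta\, C_{2},\; 0\right\} + \mathcal{H}\bigl(p_{D_{Ts}}(y|\mathbf{x})\bigr),
\end{align}
which is non-decreasing by construction (it is the pointwise maximum of a non-decreasing function on $[\beta^{\star},1]$ and a constant on $[0,\beta^{\star}]$), retains the eventual quadratic growth that certifies worst-case degradation, and explains the ``trivial at small $\beta$'' behaviour alluded to after the theorem. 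Plugging both $UB$ and $LB$ back into the decomposition of $\mathcal{L}$ closes the argument in both shift scenarios.
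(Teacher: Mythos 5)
Your proposal does not prove the statement it is attached to. The statement here is Theorem~\ref{theo:convex_kl}, the classical fact that $KL(p\|q)$ is jointly convex in the pair $(p,q)$; the paper quotes it from \citet{cover2006} as an auxiliary tool and offers no proof of its own. What you have written is instead a proof sketch of Theorem~\ref{theorem:upper_lower_bound} (the monotone upper and lower bounds on the log-loss in $\beta$), and your argument explicitly \emph{invokes} Theorem~\ref{theo:convex_kl} twice — once on the joint distributions and once on the marginals. As a proof of the convexity statement itself this is circular: you cannot establish joint convexity of KL by appealing to joint convexity of KL.

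A correct, self-contained proof of Theorem~\ref{theo:convex_kl} goes through the log-sum inequality: for nonnegative reals $a_1,a_2,b_1,b_2$,
\begin{align}
(a_1+a_2)\log\frac{a_1+a_2}{b_1+b_2} \;\leq\; a_1\log\frac{a_1}{b_1} + a_2\log\frac{a_2}{b_2},
\end{align}
which itself follows from convexity of $t\mapsto t\log t$ via Jensen's inequality. Applying this pointwise with $a_i=\lambda_i p_i(x)$ and $b_i=\lambda_i q_i(x)$ (where $\lambda_1=\lambda$, $\lambda_2=1-\lambda$) and summing over $x$ yields exactly the claimed inequality. Separately, your sketch of Theorem~\ref{theorem:upper_lower_bound} is broadly in the spirit of the paper's Appendix~\ref{appen:upper_lower_bound} (Pinsker for the lower bound, convexity plus dropping the marginal KL for the upper bound), though the paper lower-bounds the conditional KL from the first form of the loss rather than the joint KL, and it does not introduce your $\max\{\cdot,0\}$ repair; but none of that is responsive to the statement under review.
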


\subsection{Training Distribution Shift Scenario}

We first prove the lower bound. From Eq.\eqref{eq:diff_dist_extendable}, we have
\begin{align}
    p_{D_{Tr}}(\mathbf{x}) 
    & = \sum_{y} p_{D_{Tr}}(\mathbf{x}, y) 
    = \alpha \sum_{y} p_{D_{Ts}}(\mathbf{x}, y) + \beta \sum_{y} p_{\mathcal{D}_{tr, 2}}(\mathbf{x}, y)
    = \alpha p_{D_{Ts}}(\mathbf{x}) + \beta p_{\mathcal{D}_{tr, 2}}(\mathbf{x})
    \label{eq:diff_dist_2}
\end{align}
and thus
\begin{align}
    p_{D_{Tr}}(y|\mathbf{x}) 
    = \frac{ p_{D_{Tr}}(\mathbf{x}, y) }{ p_{D_{Tr}}(\mathbf{x}) } 
    = \frac{ \alpha p_{D_{Ts}}(\mathbf{x}, y) + \beta p_{\mathcal{D}_{tr, 2}}(\mathbf{x}, y) }{ \alpha p_{D_{Ts}}(\mathbf{x}) + \beta p_{\mathcal{D}_{tr, 2}}(\mathbf{x}) }
    \label{eq:diff_dist_3}
\end{align}

As $\mathcal{H} \left( p_{D_{Ts}}(y | \mathbf{x}) \right)$ is a  constant with respect to $\beta$ and can be ignored, we apply Theorem \ref{theo:pinsker_ineq} into Eq.\eqref{eq:L_first_form} to get
\begin{align}
    \mathcal{L}(\theta, D_{Tr}, D_{Ts}) 
    & \sim KL( p_{D_{Ts}}( y | \mathbf{x}) || p_{\theta}( y | \mathbf{x} ) ) = KL( p_{D_{Ts}}( y | \mathbf{x}) || p_{D_{Tr}}( y | \mathbf{x} ) )  \nonumber \\
    & \geq \frac{1}{2ln 2} \left( \sum_{\mathbf{x}, y} | p_{D_{Ts}}( y | \mathbf{x}) -  p_{D_{Tr}}( y | \mathbf{x}) |  \right)^2
    \label{eq:diff_dist_1_1}
\end{align}

Now,
\begin{align}
| p_{D_{Ts}}( y | \mathbf{x}) -  p_{D_{Tr}}( y | \mathbf{x}) | 
& = \left | p_{D_{Ts}}( y | \mathbf{x}) - \frac{ \alpha p_{D_{Ts}}(\mathbf{x}, y) + \beta p_{\mathcal{D}_{tr, 2}}(\mathbf{x}, y) }{ \alpha p_{D_{Ts}}(\mathbf{x}) + \beta p_{\mathcal{D}_{tr, 2}}(\mathbf{x}) } \right |  \nonumber \\
& = \left| \frac{\beta}{ \alpha p_{D_{Ts}}(\mathbf{x}) + \beta p_{\mathcal{D}_{tr, 2}}(\mathbf{x}) }  \right| \times | p_{D_{Ts}}( y | \mathbf{x}) p_{\mathcal{D}_{tr, 2}}(\mathbf{x}) - p_{\mathcal{D}_{tr, 2}}(\mathbf{x}, y) |  \nonumber \\
& = \left| \frac{1}{ \frac{\alpha}{\beta} p_{D_{Ts}}(\mathbf{x}) + p_{\mathcal{D}_{tr, 2}}(\mathbf{x}) }  \right| \times | p_{D_{Ts}}( y | \mathbf{x}) p_{\mathcal{D}_{tr, 2}}(\mathbf{x}) - p_{\mathcal{D}_{tr, 2}}(\mathbf{x}, y) |
\end{align}

As $| p_{D_{Ts}}( y | \mathbf{x}) p_{\mathcal{D}_{tr, 2}}(\mathbf{x}) - p_{\mathcal{D}_{tr, 2}}(\mathbf{x}, y) |$, $p_{D_{Ts}}(\mathbf{x})$, and $p_{\mathcal{D}_{tr, 2}}(\mathbf{x})$  are all $\geq 0$, it is straightforward to see that $| p_{D_{Ts}}( y | \mathbf{x}) -  p_{D_{Tr}}( y | \mathbf{x}) |$ is a monotonic non-decreasing function on $\beta$. As a consequence, we conclude that the lower-bound of loss function $\mathcal{L}(\theta, D_{Tr}, D_{Ts})$ is also a monotonic non-decreasing function on $\beta$.

To prove the upper bound, from Eq.\eqref{eq:diff_dist_extendable}, applying Theorem \ref{theo:convex_kl}, we have
\begin{align}
KL( p_{D_{Ts}}(\mathbf{x}, y) || p_{\theta}( \mathbf{x}, y ) ) 
& =  KL( p_{D_{Ts}}(\mathbf{x}, y) || p_{D_{Tr}}( \mathbf{x}, y ) )  \nonumber \\
& \leq \alpha KL( p_{D_{Ts}}(\mathbf{x}, y) || p_{D_{Ts}}(\mathbf{x}, y) ) + \beta KL( p_{D_{Ts}}(\mathbf{x}, y) || p_{\mathcal{D}_{tr,2}}(\mathbf{x}, y) )  \nonumber \\
& = \beta KL( p_{D_{Ts}}(\mathbf{x}, y) || p_{\mathcal{D}_{tr,2}}(\mathbf{x}, y) )
\end{align}

Applying Theorem \ref{theo:pinsker_ineq} to the KL divergence of the feature distribution, we have
\begin{align}
KL( p_{D_{Ts}}(\mathbf{x}) || p_{\theta}( \mathbf{x} ) )
& =  KL( p_{D_{Ts}}(\mathbf{x}) || p_{D_{Tr}}( \mathbf{x} ) )  \nonumber \\
& \geq \frac{1}{2ln 2} \left( \sum_{\mathbf{x}} | p_{D_{Ts}}( \mathbf{x}) -  p_{D_{Tr}}( \mathbf{x}) |  \right)^2  \nonumber \\
& = \frac{\beta}{2ln 2} \left( \sum_{\mathbf{x}} | p_{D_{Ts}}( \mathbf{x}) -  p_{\mathcal{D}_{tr,2}}( \mathbf{x}) |  \right)^2 
\end{align}

Now as $\mathcal{H} \left( p_{D_{Ts}}(y | \mathbf{x}) \right)$ is a  constant with respect to $\beta$, we consider Eq.\eqref{eq:L_second_form} to get
\begin{align}
    & \mathcal{L}(\theta, D_{Tr}, D_{Ts})  \nonumber \\
    & \sim KL( p_{D_{Ts}}(\mathbf{x}, y) || p_{\theta}( \mathbf{x}, y ) ) -  KL( p_{D_{Ts}}(\mathbf{x}) || p_{\theta}( \mathbf{x} ) )  \nonumber \\
    & \leq \beta KL( p_{D_{Ts}}(\mathbf{x}, y) || p_{\mathcal{D}_{tr,2}}(\mathbf{x}, y) )  - \frac{\beta}{2ln 2} \left( \sum_{\mathbf{x}} | p_{D_{Ts}}( \mathbf{x}) -  p_{\mathcal{D}_{tr,2}}( \mathbf{x}) |  \right)^2  \nonumber \\
    & = \beta \left[ KL( p_{D_{Ts}}(\mathbf{x}, y) || p_{\mathcal{D}_{tr,2}}(\mathbf{x}, y) ) - KL( p_{D_{Ts}}(\mathbf{x}) || p_{\mathcal{D}_{tr,2}}(\mathbf{x}) ) \right]  \nonumber \\
    & \qquad + \beta \left[ KL( p_{D_{Ts}}(\mathbf{x}) || p_{\mathcal{D}_{tr,2}}(\mathbf{x}) ) - \frac{1}{2ln 2} \left( \sum_{\mathbf{x}} | p_{D_{Ts}}( \mathbf{x}) -  p_{\mathcal{D}_{tr,2}}( \mathbf{x}) |  \right)^2 \right]
    \label{eq:diff_dist_upper_1}
\end{align}

Note that both terms on the right hand side are $\geq 0$ due to the chain rule of KL divergence and Pinsker's inequality. Therefore, the upper-bound of loss function $\mathcal{L}(\theta, D_{Tr}, D_{Ts})$ is a monotonic non-decreasing function of $\beta$.

\subsection{Testing Distribution Shift Scenario}
The proof for this case is similar to that of Shifted Training Data Scenario, by swapping $p_{D_{Ts}}$ and $p_{D_{Tr}}$.

\section{Further Empirical Results}\label{app:numeric}

\begin{figure*}
    \begin{minipage}{0.48\textwidth}
        \centering
        \includegraphics[width=\textwidth]{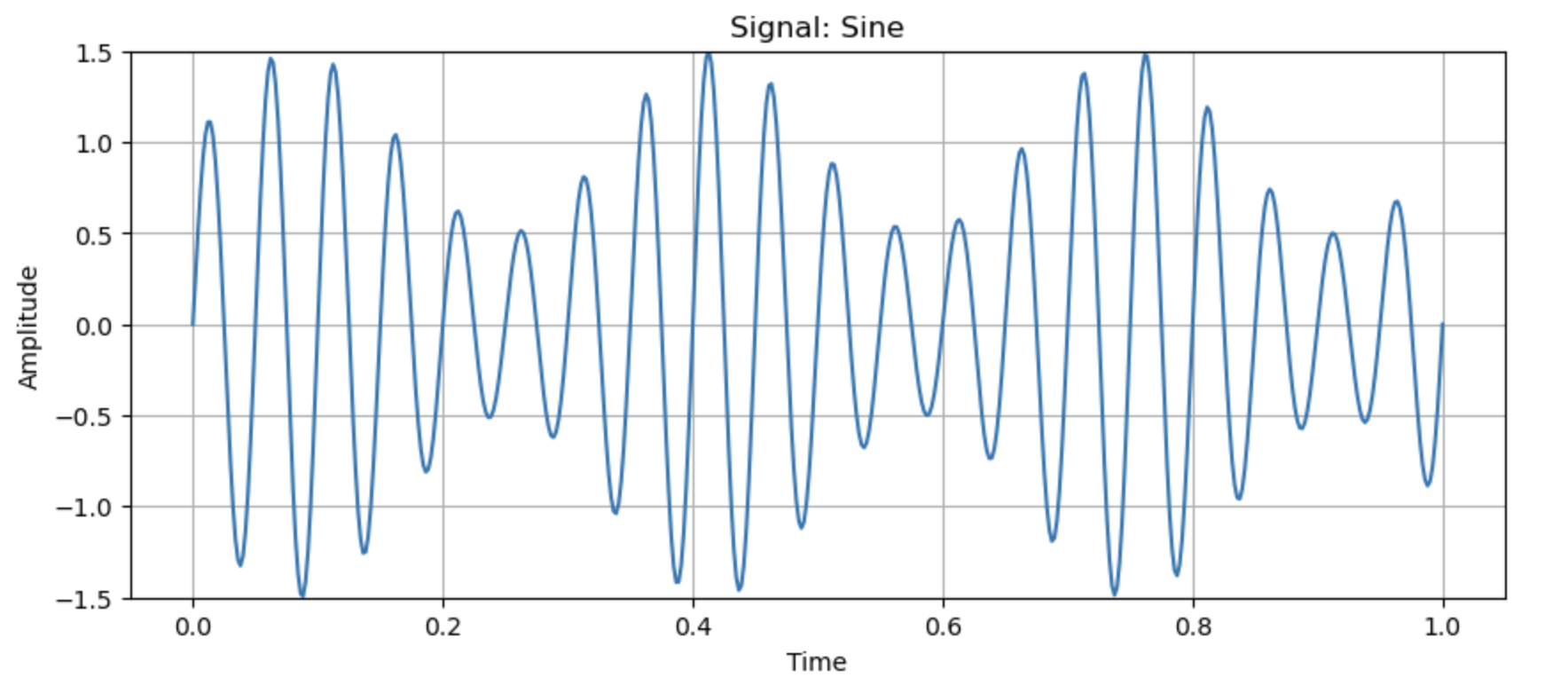}
    \end{minipage}
    \hfill 
    \begin{minipage}{0.48\textwidth}
        \centering
        \includegraphics[width=\textwidth]{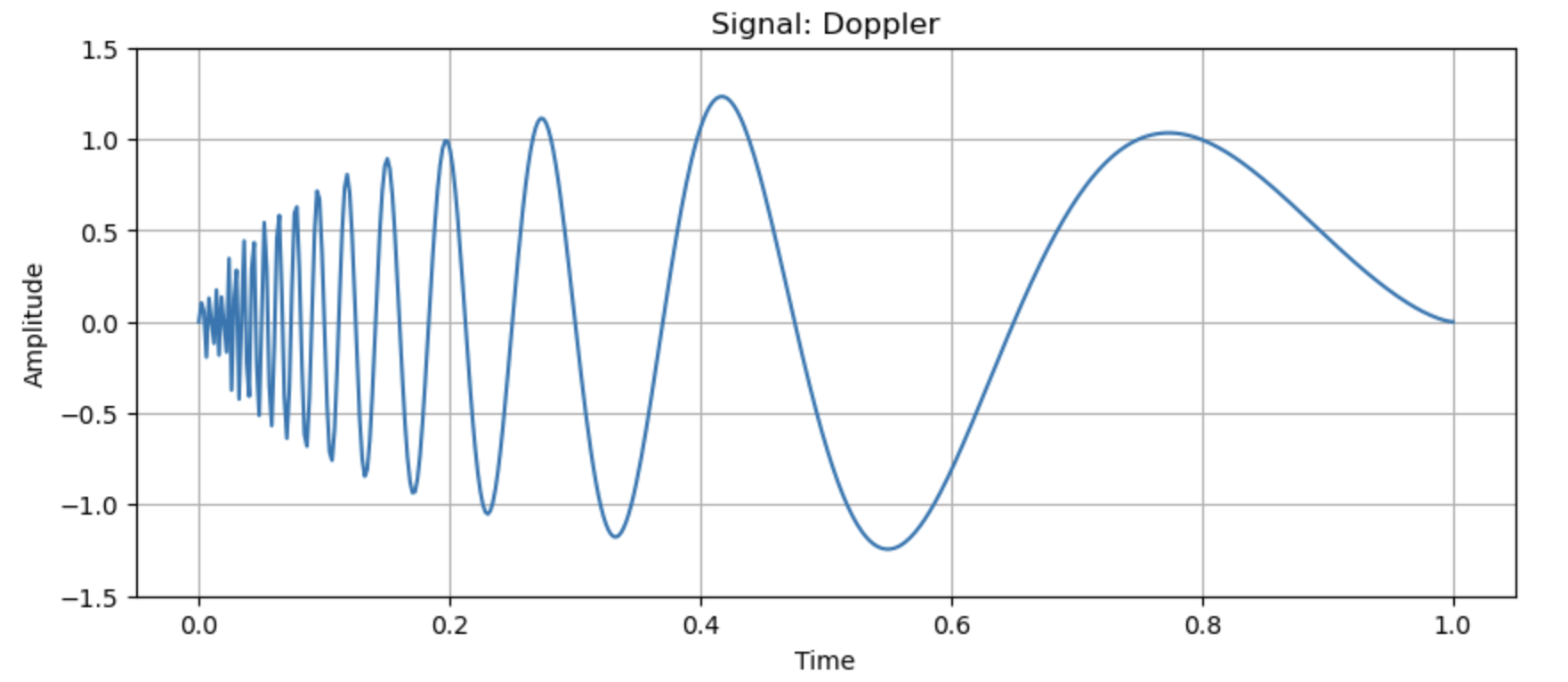}
    \end{minipage}

    \vspace{1em} 

    \makebox[\textwidth]{%
        \begin{minipage}{0.6\textwidth}
            \centering
            \includegraphics[width=\textwidth]{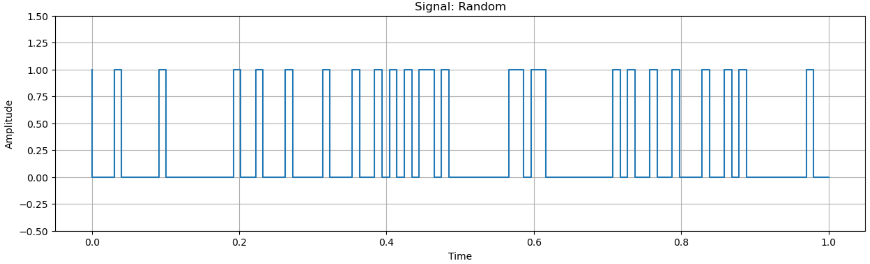}
        \end{minipage}
    }

    \caption{\emph{Groundtruth signals used for experiments.}}
    \label{fig:gt}
\end{figure*}

\begin{figure}
    \begin{minipage}{\textwidth}
        \centering
        \includegraphics[width=0.5\textwidth]{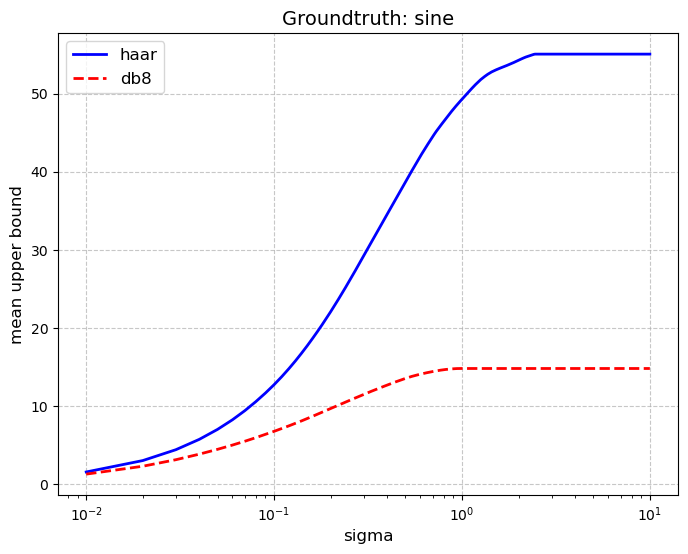} 
    \end{minipage}

    \vspace{1em} 

    \begin{minipage}{\textwidth}
        \centering
        \includegraphics[width=0.5\textwidth]{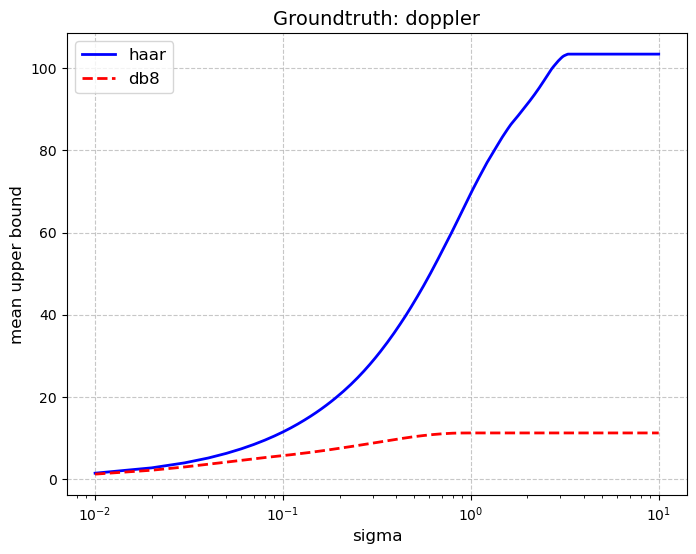} 
    \end{minipage}

    \vspace{1em} 

    \begin{minipage}{\textwidth}
        \centering
        \includegraphics[width=0.5\textwidth]{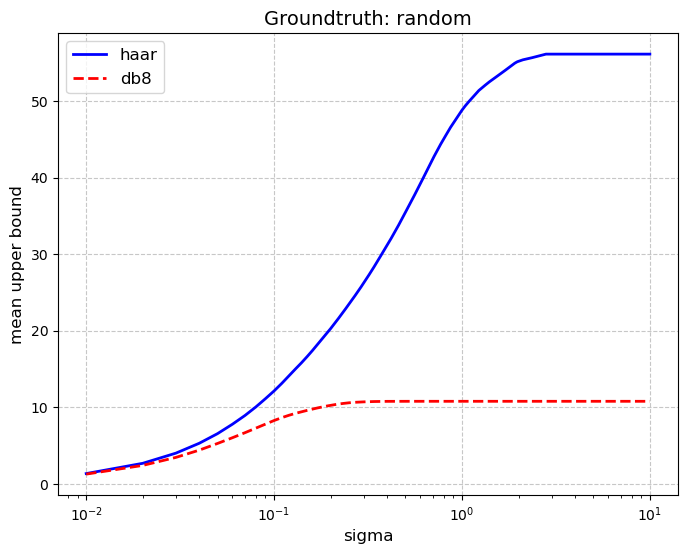} 
    \end{minipage}

    \caption{\emph{Bound in Lemma \ref{lem:gen-wave} averaged across all time-stamps vs different noise levels in a semi-log plot. The groundtruth signals that are used are displayed in Fig.\ref{fig:gt}. We can see that using a higher order wavelet like DB8 results in significantly lower values for the bound. This provides empirical evidence for the phenomenon where Lemma \ref{lem:gen-wave} can lead to sharper rates than that obtained by Theorem \ref{thm:point-wise}. We remind the reader that the bound in Theorem \ref{thm:point-wise} is only an upperbound of the bound in Lemma \ref{lem:gen-wave} applied to Haar wavelets.}}
    \label{fig:ub}
\end{figure}

In Lemma \ref{lem:gen-wave} a bound that connects the estimation error to the degree of sparsity of the wavelet coefficients was established. The bound is general enough to support any chosen wavelet basis. When we choose the wavelet system to be Haar, the bound in Lemma \ref{lem:gen-wave} can be further upperbounded by the variational expression displayed in Theorem \ref{thm:point-wise}. However it was argued in Section \ref{sec:analysis} that by choosing other wavelet systems, the bound in Lemma \ref{lem:gen-wave} can potentially lead to sharper error rates than that can be obtained by the Haar system. In this section, we give empirical evidence for this claim. The experimental setup is exactly same as described in Section \ref{sec:exp}. We report the value of the bound in Lemma \ref{lem:gen-wave} averaged across all timestamps vs different noise standard deviations in Figure \ref{fig:ub}. Experimental results similar to that of Section \ref{sec:exp} is reported in Tables \ref{tab:sine-known} and \ref{tab:sine-unknown}.

\begin{table*}[h]
    \centering
    \begin{minipage}{\textwidth}
        \centering
        \begin{tabular}{|c|c|c|c|}
        \hline
\begin{tabular}[c]{@{}c@{}}Noise\\ Level\end{tabular} & AVG                & \begin{tabular}[c]{@{}c@{}}HAAR\\ (\emph{This paper})\end{tabular}               &\begin{tabular}[c]{@{}c@{}}DB8\\ (\emph{This paper})\end{tabular}                         \\ \hline
$0.2$                                                 & $0.539 \pm 0.0011$ & $0.110 \pm 0.0019$ & $\mathbf{0.019 \pm 0.0005}$ \\ \hline
$0.3$                                                 & $0.541 \pm 0.0017$ & $0.170 \pm 0.0044$ & $\mathbf{0.031 \pm 0.0008}$ \\ \hline
$0.5$                                                 & $0.542 \pm 0.0027$ & $0.249 \pm 0.0070$ & $\mathbf{0.078 \pm 0.0018}$ \\ \hline
$0.7$                                                 & $0.544 \pm 0.0037$ & $0.247 \pm 0.0101$ & $\mathbf{0.141 \pm 0.0039}$ \\ \hline
$1$                                                   & $0.547 \pm 0.0052$ & $0.238 \pm 0.0154$ & $\mathbf{0.264 \pm 0.0065}$ \\ \hline
\end{tabular}
\caption*{Ground truth: Sine}
    \end{minipage}
    \caption{MSE of different algorithms when the noise standard deviation is known.}\label{tab:sine-known}
\end{table*}


\begin{table*}[h]
    \centering
    \begin{minipage}{\textwidth}
        \centering
\begin{tabular}{|c|c|c|c|c|c|}
\hline
\begin{tabular}[c]{@{}c@{}}Noise\\ Level\end{tabular} & AVG                & ARW                & Aligator           & \begin{tabular}[c]{@{}c@{}}HAAR\\ (\emph{This paper})\end{tabular}               & \begin{tabular}[c]{@{}c@{}}DB8\\ (\emph{This paper})\end{tabular}                         \\ \hline
$0.2$                                                 & $0.538 \pm 0.0023$ & $0.542 \pm 0.0009$ & $0.435 \pm 0.0037$ & $0.152 \pm 0.0046$ & $\mathbf{0.029 \pm 0.0017}$ \\ \hline
$0.3$                                                 & $0.538 \pm 0.0035$ & $0.541 \pm 0.0014$ & $0432 \pm 0.0056$  & $0.184 \pm 0.0090$ & $\mathbf{0.037 \pm 0.0029}$ \\ \hline
$0.5$                                                 & $0.538 \pm 0.0057$ & $0.538 \pm 0.0021$ & $0.427 \pm 0.0093$ & $0.245 \pm 0.0140$ & $\mathbf{0.058 \pm 0.0061}$ \\ \hline
$0.7$                                                 & $0.539 \pm 0.0078$ & $0.526 \pm 0.0035$ & $0.422 \pm 0.0129$ & $0.265 \pm 0.0150$ & $\mathbf{0.085 \pm 0.0098}$ \\ \hline
$1$                                                   & $0.541 \pm 0.0108$ & $0.508 \pm 0.0048$ & $0.415 \pm 0.0182$ & $0.272 \pm 0.0225$ & $\mathbf{0.146 \pm 0.0163}$ \\ \hline
\end{tabular}
\caption*{Ground truth: Sine}
    \end{minipage}
    \caption{MSE of different algorithms when the noise standard deviation is unknown.}\label{tab:sine-unknown}
\end{table*}

\section{Other Omitted Proofs}

\genwave*
\begin{proof}
We start by noticing a bound on the maximum of $\log_2 n$ sub-gaussian random variables. Let $\epsilon_1,\ldots,\epsilon_{\log_2 n}$ be iid $\sigma$-sub-gaussian random variables. By sub-gaussian tail inequality we have for a fixed $i$,
\begin{align}
    P(|\epsilon_i| > t) \le 2 \exp(-t^2/(2\sigma^2)).
\end{align}

Now taking a union bound across all $\log_2 n$ random variables yields
\begin{align}
    P(\max_i |\epsilon_i| > t) \le 2 \log_2 n \exp(-t^2/2\sigma^2).
\end{align}

Hence we conclude that with probability at-least $1-\delta$, we have
\begin{align}
    \max_i |\epsilon_i| \le \sigma \sqrt{2 \log(\log n / \delta)} \label{eq:max-gaussian}.
\end{align}

We have $\tilde {\bs \beta} := \bs W \bs y = \bs W \bs \theta + \tilde {\bs \epsilon}$, where $\tilde {\bs \epsilon}$ are again drawn from a spherical Normal distribution. Let $\bs \beta = \bs W \bs \theta$ be the true wavelet coefficients.

Note that due to the dyadic MRA structure of the wavelet basis, only $\log_2 n$ wavelet coefficients are going to affect the value of the estimate $\hat \theta_1$.  So $\cI$ in the lemma statement becomes equal to a set $\{j_1,\ldots,j_{\log_2 n} \}$ for appropriatetly chosen indices. Let $\tilde \epsilon_{j_1},\ldots,\tilde \epsilon_{j_{\log_2 n}}$ be the noise associated with those wavelet coefficients.

Let $\hat \beta_i = T_{\lambda}(\tilde \beta_i)$ with $\lambda = 2 \sigma \sqrt{2 \log(\log n / \delta)}$. Throughout the proof we use $\mathbb I$ to denote the binary indicator function.

For any $i \in \{j_1,\ldots,j_{\log_2 n} \}$, due to Eq.\eqref{eq:max-gaussian}, we have with probability at-least $1-\delta$ that $|\tilde \epsilon_i| \le \lambda$. Now we proceed to bound the estimation error of soft-threshold in a coordinate-wise manner. We proceed in three cases:

Case 1: when $|\beta_i| \le \lambda/2$. Then with probability at-least $1-\delta$, we have that $|\tilde \beta_i| \le \lambda$. So we must have $\hat \beta_i = 0$ with high probability. Consequently, $|\hat \beta_i - \beta_i| = |\beta_i| = \min \{|\beta_i|, \lambda/2\}$.

Case 2: when $|\beta_i| \ge 3\lambda/2 $. Then with probability at-least $1-\delta$ we must have $|\hat \beta_i - \beta_i| = |\tilde \epsilon_i - \lambda| \le 2\lambda = 2 \min\{|\beta_i|, \lambda \}$.

Case 3: when $\lambda/2 \le |\beta_i| \le 3\lambda/2$. Here we have,

Note that if $\text{sign}(\tilde \beta_i) = 1$ then $\text{sign}(\tilde \beta_i)(|\tilde \beta_i| - \lambda) - \beta_i = \tilde \epsilon_i - \lambda$. On the other hand if $\text{sign}(\tilde \beta_i) = -1$, then $\text{sign}(\tilde \beta_i)(|\tilde \beta_i| - \lambda) - \beta_i = \tilde \epsilon_i  + \lambda$. Hence in both situations, we have with probability at-least $1-\delta$ that $|\text{sign}(\tilde \beta_i)(|\tilde \beta_i| - \lambda) - \beta_i| \le 2\lambda$. With this in mind, we proceed to bound the estimation error for Case 3.

\begin{align}
    |\hat \beta_i - \beta_i|
    &= |\beta_i| \mathbb I \{ |\tilde \beta_i| \le \lambda\} + (\text{sign}(\tilde \beta_i)(|\tilde \beta_i| - \lambda) - \beta_i) \mathbb I \{ |\tilde \beta_i| > \lambda\} \nonumber \\
    &\le 2 \max\{|\beta_i|, \lambda\} \nonumber \\
    &\le_{(a)} 3 \lambda \nonumber \\
    &\le 6 \min\{ |\beta_i|, \lambda \}
\end{align}
where in the last line we used the fact that $\lambda/2 \le |\beta_i|$ and in line (a) we used $|\beta_i| \le 3 \lambda/2$.

Combining all three cases leads to the conclusion that with probability at-least $1-\delta$, we have for all $i \in {j_1,\ldots,j_{\log_2 n}}$,
\begin{align}
    |\hat \beta_i - \beta_i|
    &\le 6 \min\{ |\beta_i|, \lambda \}.
\end{align}

Now the statement of the lemma follows by reconstructing the last value of the signal from denoised wavelet coefficients and applying triangle inequality.

\end{proof}

\begin{restatable}{lemma}{wave} \label{lem:wave}
Let the orthonormal Haar wavelet transform matrix be denoted by $\bs H$. Define quantities imilar as in lemma \ref{lem:gen-wave}. We follow a dyadic indexing scheme to index elements in $\bs \beta$, i.e. $\bs \beta := [\beta_{-1}, \beta_{0,0}, \beta_{1,0}, \beta_{1,1}, \ldots, \beta_{j,0}, \beta_{j,1}, \ldots, \beta_{j,2^{j}-1},
\ldots$, $\beta_{\log_2 n - 1,0}$, $\ldots$, $\beta_{\log_2 n - 1, n/2-1}]^T$ $\in \mathbb R^n$. Then the estimate $\hat \theta_1$ returned by algorithm  \ref{alg:wave} satisfies
\begin{align}
    |\hat \theta_1 - E[f(Z_1)]|
    &\le 3 (\beta_{-1} \wedge \lambda)/\sqrt{n} \\
    &\quad+ 3 \sum_{i=0}^{\log_2 n - 1} \frac{(|\beta_{i,2^i-1}| \wedge \lambda)}{\sqrt{n/2^i}},
\end{align}
with probability at-least $1-\delta$.
\end{restatable}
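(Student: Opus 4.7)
The plan is to reduce Lemma~\ref{lem:wave} to a direct application of Lemma~\ref{lem:gen-wave} by making explicit the structure of the Haar system under dyadic indexing. Because the mother Haar wavelet at resolution $j$ has support length $n/2^j$, and wavelets at the same resolution have pairwise disjoint supports, for every $j \in \{0,\ldots,\log_2 n - 1\}$ there is exactly one detail wavelet whose support contains the last index $n$. Under the dyadic indexing given in the statement (and the reverse-time convention $\bs y = [y_n,\ldots,y_1]^T$, so that $\theta_1$ sits at position $n$), that unique wavelet at level $j$ is the rightmost one, with index $k = 2^j - 1$. Together with the scaling (approximation) wavelet, this identifies the relevant index set from Lemma~\ref{lem:gen-wave} as
\begin{align}
\cI \;=\; \{-1\} \cup \{(j, 2^j-1) : j=0,\ldots, \log_2 n - 1\},
\end{align}
which has cardinality $\log_2 n + 1$.

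Next I would compute the entries $|W_{i,n}|$ of the orthonormal Haar matrix $\bs H$ restricted to these indices. The scaling row is uniform with value $1/\sqrt{n}$, so $|W_{-1,n}| = 1/\sqrt{n}$. For the detail wavelet $\psi_{j,2^j-1}$, normalization gives nonzero entries of magnitude $\sqrt{2^j/n} = 1/\sqrt{n/2^j}$ on its support. These are exactly the weights appearing in the denominators of the target bound, so the structure matches term by term.

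Plugging $\cI$ and these explicit weights into the general bound from Lemma~\ref{lem:gen-wave} yields
\begin{align}
|\hat\theta_1 - \theta_1|
\;\le\; \frac{c\,(|\beta_{-1}|\wedge\lambda)}{\sqrt{n}} + c\sum_{j=0}^{\log_2 n - 1}\frac{(|\beta_{j,2^j-1}|\wedge\lambda)}{\sqrt{n/2^j}},
\end{align}
which is the desired shape. The only thing to track is the constant $c$: Lemma~\ref{lem:gen-wave} supplies $c=6$, while the statement of Lemma~\ref{lem:wave} claims $c=3$. The main (and really only) obstacle is therefore sharpening the coordinate-wise soft-thresholding step used inside the proof of Lemma~\ref{lem:gen-wave}. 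In that proof the loose step is Case~3 (when $\lambda/2 \le |\beta_i| \le 3\lambda/2$), where $|\hat\beta_i - \beta_i| \le 2\max\{|\beta_i|,\lambda\}$ was compared against $\min\{|\beta_i|,\lambda\}\ge\lambda/2$ to give a factor of $6$. A finer case split — separately treating $|\beta_i|\le \lambda$ (where $\min=|\beta_i|$ and the oracle error dominates) and $\lambda<|\beta_i|\le 3\lambda/2$ (where $\min=\lambda$ and $|\hat\beta_i-\beta_i|\le 2\max\{|\beta_i|,\lambda\}\le 3\lambda$) — halves the constant to $3$, after which the argument of Lemma~\ref{lem:gen-wave} proceeds unchanged.

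Everything else is routine: taking the union bound over the $\log_2 n$ noise coordinates at threshold $\lambda = 2\sigma\sqrt{2\log(\log n/\delta)}$, reconstructing via $\hat{\bs\theta} = \bs H^T \hat{\bs\beta}$, and applying triangle inequality along the $\log_2 n + 1$ contributing coordinates. I do not expect any technical surprises beyond the constant-tightening step noted above.
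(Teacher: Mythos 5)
Your proposal is correct and follows essentially the same route as the paper: identify the last column of the orthonormal Haar matrix, note that exactly one detail wavelet per resolution (the one with index $k=2^j-1$) plus the approximation row have nonzero entries there, with magnitudes $1/\sqrt{n}$ and $1/\sqrt{n/2^j}$, and then invoke Lemma~\ref{lem:gen-wave} with this index set. The one place you go beyond the paper is the constant: the paper's proof simply cites Lemma~\ref{lem:gen-wave} as a ``direct consequence,'' which as written yields the factor $6$ rather than the $3$ claimed in the statement, whereas you correctly observe that refining Case~3 of that lemma's proof (splitting at $|\beta_i|=\lambda$, where the sub-case $|\beta_i|\le\lambda$ gives error at most $\min\{|\beta_i|,\lambda\}$ and the sub-case $\lambda<|\beta_i|\le 3\lambda/2$ gives $2|\beta_i|\le 3\lambda=3\min\{|\beta_i|,\lambda\}$) delivers the stated constant; this is a legitimate repair of a gap the paper leaves implicit.
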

\begin{proof}
Consider the last column $\bs h_n$, of the Haar wavelet transform matrix. We can also index the last column using a dyadic indexing scheme as:\\

$\bs h_n = [h_{-1},h_{0,0}, h_{1,0},h_{1,1}, \ldots, h_{j,0},h_{j,1},\ldots,h_{j,2^{j}-1},
\ldots,h_{\log_2 n - 1,0},\ldots,h_{\log_2 n - 1, n/2-1}]^T \in \mathbb R^n$. From the properties of the Haar matrix, we have $h_{-1} = 1/\sqrt{n}$,
$|h_{j,k}| = \mathbb I \{ k = 2^{j}-1\}/\sqrt{n/2^{j}}$ where $\mathbb I$ is the binary indicator function.

With this structure in place, the proof the lemma is a direct consequence of Lemma \ref{lem:gen-wave}.

\end{proof}

\pointwise*
\begin{proof}
 Suppose that maximum value among the \emph{ordered set} $\cS := \{(|\beta_{-1}| \wedge \lambda)/\sqrt{n},  (|\beta_{0,0}| \wedge \lambda)/\sqrt{n}, (|\beta_{1,1}| \wedge \lambda)/\sqrt{n/2},\ldots,\\
 (|\beta_{\log_2 n -1,n/2-1}| \wedge \lambda)/\sqrt{2}\}$ occurs at the index $i^*$

With any index in the ordered set, we can associate a value of the time-scale which is basically the starting time-scale. For example, we associate the time-point index $1$ with time-scale $n$, index $2$ with  time-scale $n$, index $3$ with time-scale $n/2$, and so on untill index $\log_2 n + 1$ with time-scale $2$. Let $t^*$ be the time-scale associated with the index $i^*$.

When the minimum in the set $\cS$ is attained at $(|\beta_{-1}| \wedge \lambda)/\sqrt{n}$, then due to Lemma \ref{lem:wave}, the error $|\hat \theta_1 - E[f(Z_1)]| \le 2\log_2 n \lambda/\sqrt{n} \le 4\log_2 n \sqrt{2 \log (\log n/\delta)} U(r)$ for any $r$ and the desired conclusion is trivial. So in the rest of the proof, we focus on the case where the minimum in the set $\cS$ is \emph{not} attained at $(|\beta_{-1}| \wedge \lambda)/\sqrt{n}$.

Let $j^* \le r^* \le 2j^*$ for some $j^*$ which is a power of $2$. We consider two cases:

\textbf{Case (a):} when $t^* \ge j^*$. 

We have that
\begin{align}
    U(r^*)
    &:= (\max_{t \in S(r^*)} |\bar{\theta}_{t:1} - \theta_1|) \vee \sigma/\sqrt{r^*}) \nonumber \\
    &\ge \sigma/\sqrt{r^*} \nonumber \\
    &\ge \sigma/\sqrt{2 j^*} \label{eq:eq1}.
\end{align}

Let the maximum value in the set $\cS$ be $(|\beta| \wedge \lambda) / \sqrt{t^*}$ where $\beta$ is the value of the wavelet coefficient at the index $i^*$ in the set $\cS$ and $t^*$ is its associated time-scale. Observe that

\begin{align}
    \frac{(|\beta| \wedge \lambda)}{\sqrt{t^*}}
    &\le 2\sigma \sqrt{\frac{2 \log(\log n/\delta)}{t^*}} \nonumber \\
    &\le_{(a)} 2\sigma \sqrt{\frac{2 \log(\log n/\delta)}{j^*}} \nonumber \\
    &= 4\sigma \sqrt{\frac{ \log(\log n/\delta)}{2j^*}} \nonumber \\
    &\le 4 \sqrt{\log(\log n/\delta)} U(r^*),
\end{align}
where in line (a) we used the fact that $t^* \ge j^*$ for the case considered and in the last line we used Eq.\eqref{eq:eq1}

Now we can use Lemma \ref{lem:wave} to upper bound the estimation error of algorithm \ref{alg:wave} as
\begin{align}
    |\hat \theta_1 - E[f(Z_1)]|
    &\le (\log_2 n  + 1) \frac{(\beta \wedge \lambda)}{\sqrt{t^*}} \nonumber \\
    &\le 4(\log_2 n  + 1) \sqrt{\log(\log n/\delta)} U(r^*),
\end{align}

\textbf{Case (b):} when $j^* > t^*$
We split the analysis into two scenarios:

\textbf{Scenario 1}: $U(r^*) = \sigma/\sqrt{r^*}$
The immediate conclusion for this scenario is:
\begin{align}
    \max_{t \le r^*} |\bar {\theta}_{t:1} - \theta_1| \le \frac{\sigma}{\sqrt{r^*}} \label{eq:scene1}
\end{align}

\textbf{Scenario 2}: $U(r^*) = \max_{t \le r^*} |\bar {\theta}_{t:1} - \theta_1|$

In this scenario, we immediately obtain
\begin{align}
    U(r^*) \ge \sigma/\sqrt{r^*} \label{eq:scene2u}
\end{align}

Due to the optimality of $U(r^*)$, we have
\begin{align}
    \max_{t \le r^*} |\bar {\theta}_{t:1} - \theta_1|
    &< \max_{t \le r^*-1} |\bar {\theta}_{t:1} - \theta_1| \vee \frac{\sigma}{\sqrt{r^*-1}},
\end{align}
Recall that $r^*$ is the smallest optimal time-point (see the statement of the Theorem). Observe that $\max_{t \le r^*-1} |\bar {\theta}_{t:1} - \theta_1| \vee \sigma/\sqrt{r^*-1}$ must be equal to $\sigma/\sqrt{r^*-1}$. Otherwise $r^*$ will not be the smallest optimal time-point. So in this scenario, we obtain
\begin{align}
   \max_{t \le r^*} |\bar {\theta}_{t:1} - \theta_1|
   < \frac{\sigma}{\sqrt{r^*-1}} \label{eq:scene2theta}
\end{align}

Hence for case(b), we can conclude from Eq.\eqref{eq:scene1} and \eqref{eq:scene2theta} that

\begin{align}
   \max_{t \le r^*} |\bar {\theta}_{t:1} - \theta_1|
   < \frac{\sigma}{\sqrt{r^*-1}}, \label{eq:theta}    
\end{align}

and

\begin{align}
    U(r^*) \ge \sigma/\sqrt{r^*}. \label{eq:sceneu}    
\end{align}

Let $\beta$ be the Haar wavelet coefficient corresponding to the index $i^*$ (with $t^*$ being the time-scale associated with that index as defined earlier) in the ordered set $\cS$. We have
\begin{align}
    \frac{|\beta|}{\sqrt{t^*}}
    &= |\bar {\theta}_{t^*:1} - \bar{\theta}_{t^*/2:1}| \nonumber \\
    &\le |\bar {\theta}_{t^*:1} - \theta_1| + |\bar{\theta}_{t^*/2:1} - \theta_1| \nonumber \\
    &\le 2 \max_{t \le t^*} |\bar {\theta}_{t:1} - \theta_1| \nonumber \\
    &\le_{(a)} 2 \max_{t \le r^*} |\bar {\theta}_{t:1} - \theta_1| \nonumber \\
    &\le \frac{2\sigma}{\sqrt{r^*-1}},
\end{align}
where in the last line we used Eq,\eqref{eq:theta} and in line (a) we used the fact that $t^* < j^* \le r^*$ in Case (b) under consideration.

Hence we can conclude that
\begin{align}
    \frac{(|\beta| \wedge \lambda)}{\sqrt{t^*}}
    &\le \frac{|\beta|}{\sqrt{t^*}} \nonumber \\
    &\le \frac{2\sigma}{\sqrt{r^*-1}} \nonumber \\
    &\le \frac{2\sqrt{2}\sigma}{\sqrt{r^*}} \nonumber \\
    &\le 2\sqrt{2} U(r^*),
\end{align}
where in the last line we used Eq.\eqref{eq:sceneu}.

Now we can use Lemma \ref{lem:wave} to upper bound the estimation error of algorithm \ref{alg:wave} as
\begin{align}
    |\hat \theta_1 - E[f(Z_1)]|
    &\le (\log_2 n  + 1) \frac{(\beta \wedge \lambda)}{\sqrt{t^*}} \nonumber \\
    &\le 2\sqrt{2}(\log_2 n  + 1) U(r^*),
\end{align}

\end{proof}

\cdjv*
\begin{proof}

The bound $\sum_{i \in \cI} 6|W_{i,n}| \cdot  (|\beta_i| \wedge \lambda)$ is immediate from Lemma \ref{lem:gen-wave}.

The proof of the Corollary follows mainly the steps of the proof of Theorem \ref{thm:point-wise} with $U(\cdot)$ replaced by $\tilde U(\cdot)$. We elaborate on the arguments that take deviation from the former proof.

The departure in the proof steps happens in the way we bound $|\beta|/\sqrt{t^*}$ in case (b) of the proof of Theorem \ref{thm:point-wise}. We follow the follow the dyadic indexing scheme and notations defined the previous proof.

We focus on the case where $i^* > 2$. Otherwise we can bound the error optimally using similar logic we used to bound the error when $i^* = 1$ in the proof of Theorem \ref{thm:point-wise}.

\begin{align}
    \frac{|\beta|}{\sqrt{t^*}}
    &\le \sup_{j \ge 1} \sqrt{\frac{2^{j/2}}{n}} \|\beta_{j,0:2^{j}-1} \|_1\\
    &= O(\text{TV}(\theta_{t^*:1}))
\end{align}
where the last line is due to the fact that the class of functions of total variation less that some quantity $C$ is contained within a Besov space $B_{1,\infty}^1$ with radius $\nu C$ for some constant $\nu$ \citep{donoho1998minimax, DJBook}. 

The rest of the proof proceeds similarly as in the proof of Theorem \ref{thm:point-wise}.

\end{proof}

\erm*
\begin{proof}
We begin by recalling the definition of Shattering Coefficient \citep{fml2012}.

\begin{definition} \label{def:shatter}
Let $\cF$ be a function that maps $\mathcal (X,Y)$ to ${0,1}$. The shattering coefficient is defined as the maximum number of behaviours over $n$ points.
\begin{align}
    S(\cF,n) := \max_{(x,y)_{1:n} \in \mathcal X \times \mathcal Y} |\{(\ell(f(x_n),y_n),\ldots,\ell(f(x_1),y_1)) : f \in \cF\} |.
\end{align}
We say that subset $\cF' \subseteq \cF$ is an $n$-shattering-set if it is a smallest subset of $\cF$ such that for any $(\ell(f(x_n),y_n),\ldots,\ell(f(x_1),y_1))$ there exists some $f' \in \cF'$ such that  $(\ell(f(x_n),y_n),\ldots,\ell(f(x_1),y_1)) =  (\ell(f'(x_n),y_n),\ldots,\ell(f'(x_1),y_1))$.

\end{definition}

By standard arguments, the excess risk can be bounded as
\begin{align}
    L_{\hat f} - L_{f_*}
    &\le | L_{\hat f} - \hat \ell_{\hat f}| + |L_{f_*} - \hat \ell_{f_*}| + \hat \ell_{\hat f} - \hat \ell_{f_*} \nonumber \\
    &\le | L_{\hat f} - \hat \ell_{\hat f}| + |L_{f_*} - \hat \ell_{f_*}|.
\end{align}

Next, we need to bound quantities of the form $|L_f- \hat \ell_f|$ for any function $f \in \cF$ so as to facilitate uniform convergence arguments later. However, observe that this is different from usual arguments done in statistical learning theory \citep{Bousquet2004} because the estimate $\hat \ell_f$ is not a sum of individual losses.

Let $(x_1'y_1'),\ldots,(x_n',y_n')$ be $n$ ghost samples from the same distribution $D_1$. Let $\ell_{f,n} := \frac{1}{n}\sum_{i=1}^n \ell(f(x_i'),y_i')$ be the empirical loss for distribution $D_1$. We have

\begin{align}
    \sup_{f \in \cF} |L_f - \hat \ell_f|
    &\le \sup_{f \in \cF} |L_f - \ell_{f,n}| + \sup_{f \in \cF} |\ell_{f,n} - \hat \ell_f|
\end{align}

The first term above can be bounded using a standard argument based on Rademacher complexity and a further application of Massart's and Sauer's lemmas \citep{fml2012}. More precisely with probability at-least $1-\delta/3$, we have

\begin{align}
    \sup_{f \in \cF} |\ell_{f,n} - L_f|
    &\le  4 \sqrt{\frac{2d \log (n)}{n}} + \sqrt{\frac{2 \log (3/\delta)}{n}}. \label{eq:b1}
\end{align}

Notice that the second term only depends on the observed and the ghost data-sets. Let $\cF'$ be an $(2n)$-shattering set of $\cF$. 

\begin{align}
    \sup_{f \in \cF} |\ell_{f,n} - \hat \ell_f|
    &=  \sup_{f \in \cF'} |\ell_{f,n} - \hat \ell_f| \nonumber \\
    &\le \sup_{f \in \cF'} |\ell_{f,n} - L_f| + \sup_{f \in \cF'} |\hat \ell_f - L_f|
\end{align}

Define $\lambda' L= 2 \sqrt{2 \log(3n S(\cF,2n)/\delta}$.
For the last term, we have by Lemma \ref{lem:gen-wave} and a union bound across $\cF'$, with probability at-least $1-\delta/3$ that
\begin{align}
     \sup_{f \in \cF'} |\hat \ell_f - L_f|
     &\le \sup_{f \in \cF'} \sum_{i \in \mathcal I} 6 |W_{i,n}|\cdot (|\beta^f_i| \wedge \lambda' \sqrt{3}) \nonumber \\
     &\le   \sqrt{3} \cdot \sup_{f \in \cF'} \sum_{i \in \mathcal I} 6 |W_{i,n}|\cdot (|\beta^f_i| \wedge \lambda') \nonumber \\
     &\le_{(a)}  \sqrt{3} \cdot \sup_{f \in \cF'} \sum_{i \in \mathcal I} 6 |W_{i,n}|\cdot (|\beta^f_i| \wedge 2\sigma \sqrt{4d \log(3\log 2n/\delta)}) \nonumber \\
     &= \sqrt{3} \cdot \sup_{f \in \cF} \sum_{i \in \mathcal I} 6 |W_{i,n}|\cdot (|\beta^f_i| \wedge  2\sigma \sqrt{4d \log(3\log 2n/\delta)}),\label{eq:b2}
\end{align},
where line (a) is due to Sauer's lemma \citep{sauer1972}.

Using standard Rademacher complexity arguments, we have with probability at-least $1-\delta/3$
\begin{align}
    \sup_{f \in \cF'} |\ell_{f,n} - L_f|
    &\le 4 \sqrt{\frac{2 \log S(\cF, 2n)}{n}} + \sqrt{\frac{2 \log (3/\delta)}{n}} \nonumber \\
    &\le  4 \sqrt{\frac{2d \log (2n)}{n}} + \sqrt{\frac{2 \log (3/\delta)}{n}}, \label{eq:b3}
\end{align}
where the last line is due to Sauer's Lemma. Now union bounding across Equations \eqref{eq:b1}, \eqref{eq:b2} and \eqref{eq:b3} yields that with probability at-least $1-\delta$ we have

\begin{align}
    L_{\hat f} - L_{f_*}
    &\le 8 \sqrt{\frac{2d \log (2n)}{n}} + 2 \sqrt{\frac{2 \log (3/\delta)}{n}} \nonumber \\
    &\quad + \sqrt{3} \cdot \sup_{f \in \cF} \sum_{i \in \mathcal I} 6 |W_{i,n}|\cdot (|\beta^f_i| \wedge  2\sigma \sqrt{4d \log(3\log 2n/\delta)}).
\end{align}
\end{proof}

\thmtv*
\begin{proof}
For two time points $a < b$, let $C_{a \rightarrow b} := \sum_{t=a+1}^b |\theta_t - \theta_{t-1}|$ be the TV of the groundtruth sequence  within an interval $[a,b]$. We begin my partitioning the horizon into $M$ bins as $[1_s,1_t],\ldots,[i_s,i_t],\ldots,[M_s,M_t]$ wherein $i_s = (i-1)_{t}+1$. For the bin number $i$, let the quantity $n_i := i_t - i_s +1$ denote its length. The bins in the partition are constructed such that $C_{i_s \rightarrow i_t} \le \sigma/\sqrt{n_i}$ while $C_{i_s \rightarrow i_t+1} > \sigma/\sqrt{n_i+1}$ for all bins (except possibly for the last bin). We first bound the number of bins in the partition. We focus on the non-trivial case where $M > 1$. We have
\begin{align}
    C
    &\ge \sum_{i=1}^{M-1} C_{i_s \rightarrow i_t+1} \nonumber \\
    &\ge \sum_{i=1}^{M-1} \sigma/\sqrt{n_i+1} \nonumber \\
    &\ge  \sum_{i=1}^{M-1} \sigma/\sqrt{2 n_i} \nonumber \\
    &\ge_{(a)}  \frac{\sigma (M-1)}{\sqrt{2n/(M-1)}} \nonumber \\
    &\ge \frac{\sigma M^{3/2}}{4\sqrt{n}},
\end{align}
where in the last line we used $M > 1$ so that $M-1 \ge M/2$ and in line (a) we used Jensen's inequality along with the convexity of the function $f(x) = 1/\sqrt{x}$. Rearranging the last display yields $M \le 4^{2/3} n^{1/3}C^{2/3}\sigma^{-2/3}$.

Next, we proceed to bound the estimation error. Consider a bin $[i_s,i_t]$ and let $j \in [i_s,i_t]$. Notice that $|\bar \theta_{i_s:j} - \theta_j| \le C_{i_s \rightarrow j} \le C_{i_s \rightarrow i_t}$. With this observation the bound in Theorem \ref{thm:point-wise} can be re-written into simpler form for any $j \in [i_s,i_t]$ as
\begin{align}
    |\hat \theta_j - \theta_j|
    &\le \kappa \left(C_{i_s \rightarrow i_t} \vee \sigma/\sqrt{j-i_s+1} \right) \nonumber \\
    &\le \kappa \cdot \sigma/\sqrt{j-i_s+1},
\end{align}
where in the last line we used the property of the partition that $C_{i_s \rightarrow i_t} \le 1/\sqrt{n_i}$.

Hence we can upperbound the risk within bin $i$ as
\begin{align}
    R_{\text{sq}}(\hat \theta_{i_s:i_t}, \theta_{i_s:i_t})
    &\le \sum_{k=1}^{n_i} \kappa^2 \cdot \sigma^2/k\\
    &\le 2 \kappa^2 \log n_i.
\end{align}

Now summing the risk across all $M$ bins and using the fact that $M \le  4^{2/3} n^{1/3}C^{2/3}\sigma^{-2/3}$ yields $R_{\text{sq}}(\hat \theta_{1:n}, \theta_{1:n}) \le 2^{7/3} \kappa^2 \log n \cdot n^{1/3}C^{2/3}\sigma^{4/3}$.

Proceeding similarly, we can bound the risk in absolute deviations. We have
\begin{align}
    R_{\text{abs}}(\hat \theta_{1:n}, \theta_{1:n})
    &= \sum_{i=1}^M R_{\text{abs}}(\hat \theta_{i_s:i_t}, \theta_{i_s:i_t}) \nonumber \\
    &\le \sum_{i=1}^M \sum_{k=1}^{n_i} \kappa \cdot \sigma/\sqrt{k} \nonumber \\
    &\le \sum_{i=1}^M 2 \kappa \sqrt{n_i} \nonumber \\
    &\le_{(a)} 2 \sigma \kappa \sqrt{Mn} \nonumber \\
    &\le 2 \kappa n^{2/3} C^{1/3} \sigma^{2/3},
\end{align}
wherein line (a) we used Cauchy–Schwarz inequality.
\end{proof}

\end{document}